\documentclass[11pt]{article}

\usepackage[margin=1in]{geometry}
\usepackage{amsmath,amssymb,amsthm}
\usepackage{graphicx}
\usepackage{booktabs}
\usepackage{adjustbox}
\usepackage{caption}
\usepackage{xcolor}
\usepackage{comment}
\usepackage[round,authoryear]{natbib}
\usepackage{microtype}
\usepackage[hidelinks]{hyperref}

\newcommand{\set}[1]{\left\{#1\right\}}
\newcommand{\re}{\mathbb{R}}
\newcommand{\bN}{\mathbb{N}}
\newcommand{\bP}{\mathbb{P}}
\newcommand{\bE}{\mathbb{E}}
\newcommand{\bD}{\mathcal{D}}
\newcommand{\cA}{\mathcal{A}}
\newcommand{\cG}{\mathcal{G}}
\newcommand{\cH}{\mathcal{H}}
\newcommand{\cR}{\mathcal{R}}
\newcommand{\cX}{\mathcal{X}}
\newcommand{\cY}{\mathcal{Y}}
\renewcommand{\Tilde}[1]{\widetilde{#1}}

\newtheorem{definition}{Definition}[section]
\newtheorem{assumption}[definition]{Assumption}
\newtheorem{proposition}[definition]{Proposition}
\newtheorem{theorem}[definition]{Theorem}
\newtheorem{corollary}[definition]{Corollary}
\newtheorem{lemma}[definition]{Lemma}

\newenvironment{proofstep}[1]{%
  \par\medskip\noindent
  \begin{minipage}{\linewidth}
  \hrule\medskip\textbf{#1}\par\smallskip
}{%
  \medskip\hrule
  \end{minipage}\par\medskip
}

\title{Entropy Regularization: A Free Correction to Cross-Entropy for Verified Demonstrations}
\author{Mihir Dhanakshirur \quad Adam Ousherovitch \quad Ambuj Tewari\\
\small Department of Statistics, University of Michigan\\
\small \texttt{mihirgd@umich.edu} \quad \texttt{aoushero@umich.edu} \quad \texttt{tewaria@umich.edu}}
\date{}

\begin{document}

\maketitle

\begin{abstract}
Large language models are often post-trained on expert demonstrations using cross-entropy (CE), even when the downstream objective is not to imitate the demonstrated solution but to produce any output accepted by a verifier. This mismatch is seen in verifiable domains with multiple correct solutions, such as mathematical reasoning and code generation, where training data may contain only one expert solution per problem. We show that minimizing cross-entropy can be misaligned with minimizing verifier risk; two policies can assign identical likelihood to the observed demonstrations while placing different probability mass on incorrect outputs. This is formalized through a learning-theoretic counterexample in which CE minimization selects a suboptimal policy. We identify that controlling the support of the learned policy can solve this problem by preventing probability mass from spreading to unsupported outputs. Since support size is non-differentiable and computationally intractable, we propose \textbf{entropy-regularized cross-entropy (ER-CE)}, using token-level Shannon entropy as a tractable proxy. Finally, across mathematical reasoning and code-generation benchmarks, we find that entropy-regularized training consistently improves verifier accuracy over standard cross-entropy. Our results identify a simple failure mode of imitation-based post-training in verifiable tasks and provide a practical objective that is better aligned with producing correct outputs.
\end{abstract}
\section{Introduction}

Large Language Models (LLMs) learn from demonstrations in two regimes: pretraining and post-training. Pretraining is fundamentally about imitation; models are trained to mimic the next-token distribution of vast datasets, making cross-entropy (CE) loss a natural and effective objective \citep{radford2019language, brown2020language}. Post-training, however, often targets a fundamentally different goal: maximizing reward on specialized downstream tasks \citep{ouyang2022training,lambert2024tulu,deepseekai2025r1}. This contrast has real consequences in \textbf{verifiable} domains with \textbf{multiple correct solutions}, such as mathematical reasoning \citep{cobbe2021gsm8k} or code generation \citep{chen2021evaluating, austin2021program}. In these settings, training data typically provides a single expert demonstration per problem. However, the ultimate objective is not to copy the expert, but to generate \emph{any} solution that satisfies the verifier.

Despite these different goals, standard post-training with demonstrations frequently uses supervised fine-tuning (SFT) using CE \citep{chung2022scaling,touvron2023llama,wang2023selfinstruct}. While imitating an expert demonstrator is \emph{sufficient} to pass a verifier, cross-entropy is misaligned with the true goal of the task. By penalizing the model for any deviation from the provided demonstration, cross-entropy forces the LLM to model the idiosyncratic details of the specific expert, rather than learning to be correct. \citet{joshi2026learninganswercorrectdemonstrations} makes this precise by showing that CE fails to PAC-learn the verifier-reward when the policy class cannot perfectly model the demonstrator. However, they did not provide a tractable way to utilize this insight.

We provide a tractable method to correct for this misalignment: entropy regularization. Specifically, we \emph{minimize} entropy. The verifier-risk of a policy is exactly the probability mass it places \emph{outside} the set of correct answers. Cross-entropy pertains only to the answer that happened to be demonstrated. It says nothing about how the remaining mass is allocated, so two policies with identical likelihood on the training sample can assign arbitrarily different amounts of mass onto answers the verifier rejects. We give a precise learning-theoretic example of how this can lead to a suboptimal policy and show how favoring policies which minimize the entropy of the distribution naturally corrects for this. This motivates a tractable fine-tuning objective that costs no additional samples relative to standard SFT. Finally, we demonstrate its practical efficacy in improving verifier accuracy across benchmarks for math and coding.\\

\paragraph{Contributions.} Our key contributions are:
\begin{enumerate}
    \item We present a counterexample showing that minimizing CE on expert demonstrations need not align with the objective of learning a policy that satisfies the verifier (Proposition~\ref{prop: counter_example}).
    \item We introduce entropy-regularized cross-entropy (ER-CE) and establish conditions under which it improves upon standard CE minimization (Theorem~\ref{th: modify_ce}).
    \item Finally, across math problem-solving and code-generation tasks, we show empirically that entropy-regularized cross-entropy outperforms standard cross-entropy (Section~\ref{sec: exp}).
\end{enumerate}
\section{Related Work}
\subsection{Aligning Post-Training with the Downstream Objective}
\label{subsec: rw_alignment}

The observation that the loss minimized during training is a poor proxy for the quantity actually evaluated at deployment predates LLMs and has been explored for LLMs in other settings.

\paragraph{Decision-focused learning.} In the predict-then-optimize literature, a predictive model is fit upstream of an optimization problem whose solution quality is the real object of interest. \citet{elmachtoub2022smart} show that minimizing prediction error doesn't necessarily minimize downstream decision regret, and propose a surrogate loss with the decision problem in mind; \citet{donti2017task} and \citet{wilder2019melding} develop variants for different settings, and \citet{elbalghiti2019generalization} give formal guarantees about this framework. This literature parallels ours. Cross-entropy can be seen as a predictive loss, and verifier risk plays the role of the decision loss, with no guarantee that improving the former improves the latter.

\paragraph{Task-loss training for structured prediction.} A similar mismatch to ours was identified in neural sequence modeling, where we have access to a single output, but there are multiple acceptable outputs. \citet{och2003minimum} and \citet{shen2016minimum} optimize the evaluation metric directly, \citet{ranzato2016sequence} train at the sequence level rather than the token level, and \citet{norouzi2016reward} incorporate the task reward into maximum likelihood. This final method spreads mass \emph{outward} from the reference similar to how ours does.

\paragraph{Surrogate losses and consistency.} Proposition~\ref{prop: counter_example} can be read as a statement that log-loss is not a consistent surrogate for the verifier-risk over the constructed class. Thus, our work connects to the family of papers analyzing surrogate consistency \citep{bartlett2006convexity, tewari2007consistency}. We emphasize what is different here. In the standard classification setting, log-loss \emph{is} calibrated and target risk is defined using the conditional label distribution instead of the \emph{support} of the demonstrator.

\paragraph{Aligning fine-tuning with test-time strategy.} A recent line of work studies a related misalignment on the inference side, asking how training should change given a fixed decoding or search procedure. \citet{chow2024inference} fine-tune directly against a Best-of-$N$ verifier. \citet{chen2025rethinking} show that cross-entropy degrades pass@$N$ coverage and construct a loss aligned for pass@N. \citet{ousherovitch2026compute} generalize this, deriving aligned losses for arbitrary aggregation and filtering. Our misalignment is upstream of, and complementary to, this line. These methods derive new losses so that per-sample likelihood gains translate into gains after inference strategies are used. The gap we identify is present at pass@$1$ with no search. Both directions belong to the broader principle that one should train for the quantity the verifier actually
measures.

\subsection{Entropy Regularization of Output Distributions}
\label{subsec: rw_entropy}

Entropy penalties have a long history, though almost always with the opposite sign to ours. \citet{grandvalet2004semi} minimize the entropy of predictions on unlabeled data in a semi-supervised setting. Conversely, \citet{pereyra2017regularizing} \emph{penalize} low-entropy outputs as a regularizer for supervised learning; the same sign convention appears as the entropy bonus used to sustain exploration in reinforcement learning \cite{DBLP:journals/corr/abs-2103-06257, haarnoja2018soft}. Our result regularizes entropy in the opposite direction.

Most closely related, \citet{agarwal2025entropy} show that minimizing token-level entropy on unlabeled self-generated outputs substantially improves LLM performance on math, physics, and coding benchmarks. Their EM-FT objective and ours are similar in form. The contributions are, however, distinct in both mechanism and justification. Their method is label-free and operates on model outputs, while ours stems from supervised fine-tuning on labeled demonstrations; more importantly, their results are purely empirical, while we derive it, showing that the entropy penalty is a tractable surrogate for a support-minimizing loss that provably PAC-learns the verifier-risk (Theorem~\ref{th: modify_ce}) where cross-entropy provably does not (Proposition~\ref{prop: counter_example}).
\subsection{Learning from an Expert Policy}
\label{subsec: rw_expert}
The work of \citet{joshi2026learninganswercorrectdemonstrations} on learning from correct demonstrations is closely related to ours and considers the same underlying problem setting. Like us, they identify a mismatch between standard cross-entropy (CE) minimization and the objective of maximizing verifier success, and establish an impossibility result for CE-based learning. Their approach, however, differs fundamentally from ours. Rather than modeling the expert policy, they argue for directly learning the verifier and propose a version-space algorithm that PAC-learns it. We instead retain the policy-modeling perspective, motivated by the practical setting in which post-training begins from a pretrained model with rich learned representations rather than training a policy from scratch. Moreover, while their results provide a compelling theoretical case for departing from standard policy imitation, their proposed version-space method is generally computationally intractable. In contrast, our entropy-regularized objective can be incorporated into standard supervised fine-tuning with negligible additional overhead and yields substantial improvements in verifier accuracy.

\section{Limitation of Cross-Entropy}
\subsection{Problem Setup}

Let $\cX$ denote the space of questions/prompts and $\cY=\set{y_1,\dots,y_K}$ denote the finite space of answers/responses. Setting $\Delta(\cY)$ to be the set of distributions on $\cY$, we denote $\cH$ to be a class of functions from $\cX$ to $\Delta(\cY)$. For a hypothesis $h\in\cH$ and question $x\in\cX$, $h(\cdot|x)\in \Delta(\cY)$ denotes the distribution over answers and $h(y|x)$ is the probability assigned to answer $y\in\cY$ for question $x$. Finally, let $(\cX\times \cY)^*$ denote $\cup_{m\geq 1}(\cX\times \cY)^m$.

Fix a distribution $\bD$ on $\cX$ and hypothesis class $\cH$. We consider the setting where an unknown true hypothesis $h^*\in\cH$ is chosen to be the demonstrator and a sample $S:=\set{(X_i,Y_i)}$ is generated as follows:
\[X\sim \bD \text{ and }Y|X\sim h^*(\cdot|X).\]
The verifier is a function $V:\cX\times \cY\to \set{0,1}$ that indicates the correctness for every (question, answer) pair. For a question $x\in \cX$, denote the set of correct answers by
\[C_x:=\set{y\in \cY: V(x,y)=1}.\]
For a distribution $p\in\Delta(\cY)$, let $\text{supp}(p):=\set{y\in\cY:p(y)>0}$ denote the support of the distribution. We assume that our demonstrator $h^*$ satisfies 
\[\text{supp}(h^*(\cdot|x))\subset C_x \text{ for all }x\in\cX.\]
The above condition ensures that for every question, the demonstrator always reveals a correct answer. Given a sample $S$, we are tasked with finding a hypothesis in $\cH$ that minimizes the verifier-risk:
\[R(h):=\bE_{X\sim\bD, Y|X\sim h(\cdot|X)}[1_{\set{Y\notin C_x}}].\]
Our theoretical benchmark for evaluating learning algorithms is the notion of \emph{Probably Approximately Correct (PAC) learnability}.
\begin{definition}
    A hypothesis class $\cH$ is \textbf{PAC-learnable} with learning algorithm $\cA:(\cX\times \cY)^*\to \cH$ and sample complexity $m(\cdot,\cdot):(0,1)\times (0,1)\to \bN$, if for any $\epsilon,\delta\in (0,1)$ and $m\geq m(\epsilon,\delta)$, for any $\bD$ and $h^*\in\cH$,
    \[\bP_{S\sim (\bD\times h^*)^m}\left( R(\cA(S))>\inf_{h\in \cH}R(h)+\epsilon\right)< \delta.\]
\end{definition}
Informally, the PAC-learnability of an algorithm and a hypothesis class certifies that with high probability over the random training sample, the learned hypothesis has small verifier-risk. Practically, the most popular strategy is to select the hypothesis that minimizes the cross-entropy: for a sample $S=\set{(X_i,Y_i)}_{i=1}^m$, 
\[\cA^{\text{CE}}(S):=\arg\min_{h\in\cH}\frac{1}{m}\sum_{i=1}^m -\log h(Y_i|X_i).\]
\subsection{Counter-example}
We demonstrate that minimizing cross-entropy can be misaligned with minimizing verifier risk by constructing a hypothesis class for which \(\mathcal A^{\mathrm{CE}}\) fails to PAC-learn. The full proof is presented in Appendix~\ref{app: pf_counter}.
\begin{proposition}
Let $\cX=\bN$ and $\cY=\set{1,2,3}$. Additionally, let $\cG$ denote the set of functions from $\cX$ to $\set{1,2}$. Consider the following hypothesis class:
    \[\cH:=\set{u}\cup \set{h^g}_{g\in\cG},\]
    where
    \[u(\cdot|x)=\left(\frac{1}{2},\frac{1}{2},0\right),\, \forall\, x\in\cX\]
    and for each $g\in\cG$,
    \begin{align*}
        h^g(\cdot|x)=
        \begin{cases}
             \left(\frac{5}{8},\frac{1}{8},\frac{1}{4}\right) &\text{ if }g(x)=1\\
        \left(\frac{1}{8},\frac{5}{8},\frac{1}{4}\right) &\text{ if }g(x)=2.
        \end{cases}
    \end{align*}
    The empirical cross-entropy minimizer $\cA^{\text{CE}}$ fails to PAC-learn $\cH$.
    \label{prop: counter_example}
\end{proposition}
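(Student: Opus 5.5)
The plan is to exhibit, for every candidate sample size $m$, a distribution $\bD$ and a demonstrator in $\cH$ for which $\cA^{\text{CE}}$ returns a hypothesis with verifier-risk bounded away from the optimum with probability close to one. I take the demonstrator to be $h^*=u$ and the verifier to be $C_x=\set{1,2}$ for all $x$, which is consistent with the standing assumption $\text{supp}(u(\cdot|x))\subset C_x$. Then $R(u)=0$, so $\inf_{h\in\cH}R(h)=0$. Every $h^g$ places mass exactly $\tfrac14$ on the answer $3\notin C_x$ at every $x$, so $R(h^g)=\tfrac14$ for every $g\in\cG$, regardless of $\bD$. It therefore suffices to show that, with high probability, the CE minimizer is some $h^g$ rather than $u$. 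Taking $\epsilon=\tfrac18$ then gives an excess risk of $\tfrac14>\epsilon$.

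\textbf{Comparing empirical cross-entropies.} On a sample $S=\set{(X_i,Y_i)}_{i=1}^m$ drawn from $u$, every $Y_i\in\set{1,2}$. The empirical CE of $u$ is exactly $\log 2$. Now suppose the $X_i$ are pairwise distinct. Define $g_S\in\cG$ by $g_S(X_i)=Y_i$ on the sample points and arbitrarily (say $1$) elsewhere. Then $h^{g_S}(Y_i|X_i)=\tfrac58$ for each $i$, so its empirical CE is $\log\tfrac85<\log 2$. Moreover, no hypothesis in $\cH$ can do better: $\tfrac58$ is the largest probability any $h^g$ assigns to any answer, and $u$ only achieves $\tfrac12$. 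Hence every empirical CE minimizer is of the form $h^{g}$ with $g$ agreeing with $g_S$ on the sample; in particular the minimum is attained, so $\cA^{\text{CE}}$ is well defined on such samples. Any such output has $R(\cA^{\text{CE}}(S))=\tfrac14$, irrespective of how ties among the $g$'s are broken.

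\textbf{Choosing $\bD$ to avoid repeated prompts.} The main obstacle is samples with repeated prompts carrying conflicting labels. At such a prompt, $h^g$ pays $\log 8$ on the minority label, and $u$ could win. PAC-learnability requires success for every $\bD$ and every $m\ge m(\epsilon,\delta)$. So, given any purported $m(\cdot,\cdot)$, I fix $\delta=\tfrac12$, set $m=m(\tfrac18,\tfrac12)$, and choose $\bD$ uniform on $\set{1,\dots,N}\subset\bN$ with $N\ge 4m^2$. A union bound over pairs gives
\[
\bP(\exists\, i\neq j: X_i=X_j)\le \binom{m}{2}\frac1N\le \frac18 .
\]
Thus with probability at least $\tfrac78>\delta$ the prompts are distinct, and on that event $R(\cA^{\text{CE}}(S))=\tfrac14>0+\tfrac18$. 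This contradicts the PAC condition for every choice of sample complexity, so $\cA^{\text{CE}}$ fails to PAC-learn $\cH$.
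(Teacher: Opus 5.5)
Your proof is correct and follows the paper's argument: take $h^*=u$ with $C_x=\set{1,2}$, let $\bD$ be uniform on a large finite set so that the prompts are distinct with probability bounded away from zero, and observe that on that event the interpolating $h^{g_S}$ has empirical cross-entropy $\log\frac{8}{5}<\log 2$, while every $h^g$ has verifier-risk $\frac14$. The differences are minor: you bound collisions by a union bound with $N\ge 4m^2$ (giving $\frac78$) instead of the paper's product estimate with $N_m=\lceil m(m-1)/2\rceil$ (giving $>\frac18$), which is simpler and valid for every $m$, and you also check that the argmin is attained and consists only of $h^g$'s.
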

\begin{proof}[Proof sketch]
    
    Suppose $u$ is our true hypothesis and consider a sample $S=\set{(X_i,Y_i)}_{i=1}^m$ with all $X_i$ distinct. We can find a function $g\in\cG$ s.t. $g(X_i)=Y_i$ for every $i$ because $Y_i$ are drawn from a distribution $u(\cdot|X_i)$ whose support is $\set{1,2}$. The hypothesis $h^g\in\cH$ that corresponds to this $g$ will have a smaller cross-entropy loss on the sample than $u$ because:
    \[u(Y_i|X_i)=\frac{1}{2}<\frac{5}{8}=h^g(Y_i|X_i)\]
    Despite $u$ being the true hypothesis, the  empirical minimizer of the cross-entropy loss will always be $h^g$. As a result, $\cA^{\text{CE}}$ incurs a non-zero risk because it assigns a weight of $\frac{1}{4}$ outside the support of $u$. The proof concludes by showing that for every $m$, there exists a distribution on $\cX$ that, with a probability bounded away from zero, draws distinct $X_1,\dots,X_m$ for every $m$.
\end{proof}
There are a couple of points worth noting here. First, our proof crucially relies on the asymmetry between cross-entropy and the verifier risk; cross-entropy depends only on the weight assigned to a sampled $Y_i$ while computing the verifier risk involves probabilities on the set $\cY\setminus\set{Y_i}$. Secondly, our hypothesis class $\cH$ is rich when viewed as a class of probabilistic hypotheses but is in fact quite simple when we ignore the probabilities and focus only on the supports. If we define
\[\cR:=\set{x\mapsto \text{supp}(h_x):h\in \cH},\]
then in Proposition~\ref{prop: counter_example}, 
\[\cR=\set{x\mapsto \set{1,2}} \cup \set{x\mapsto \set{1,2,3}},\]
so $|\cR|=2$ even though $|\cH|$ grows exponentially in $m$. Interestingly, the hypothesis class $\cH$ above can be PAC-learned by $\cA^{\text{trivial}}$, the algorithm that ignores probabilities and always returns hypothesis $u$. This follows from two simple observations: (1)
$\text{supp}(u(\cdot|x))\subset \text{supp}(h(\cdot|x)) \text{ for all }x\in\cX,h\in\cH$ and (2) if the support of our selected hypothesis is contained in the support of the true hypothesis for every question, then its verifier-risk is $0$. Even though $\cA^{\text{trivial}}$ cannot be generalized beyond the counter-example in Proposition~\ref{prop: counter_example}, it motivates a loss function that explicitly accounts for the support of the candidate hypothesis.
\section{Entropy Regularized Cross-Entropy}
\label{sec: proof}
\begin{definition}
    For a hypothesis class $\cH\subset (\Delta(\cY))^\cX$, the \textbf{induced support class} is given by
    \[\cR_\cH:=\set{x\mapsto \text{supp}(h_x):h\in \cH}.\]
\end{definition}
$\cH$ in Proposition~\ref{prop: counter_example} was an example of a hypothesis class with an induced support class of size $2$. We generalize the setting in the counter-example by studying the PAC-learnability of hypothesis classes whose induced support class is finite.
\begin{assumption}
    For a hypothesis class $\cH$, the induced support class is finite, that is, $|\cR_\cH|<\infty$.
    \label{ass: finite_support}
\end{assumption}
Under a minimal assumption, we prove that such hypotheses classes can be PAC-learned by modifying the cross-entropy loss.\footnote{This assumption is true in the LLM-verifier setting with a finite vocabulary and maximum sequence length.}
\begin{assumption}
\label{ass: lower_bound}
    For a hypothesis class $\cH$, $\exists\,\eta>0$ s.t. $\forall\, x\in \cX, y\in \cY, h\in \cH$,
    \[h_x(y)\in \set{0}\cup [\eta,1],\]
\end{assumption}
The counter-example in Proposition~\ref{prop: counter_example} satisfies Assumption~\ref{ass: lower_bound} with $\eta=\frac{1}{8}$. This assumption is also natural in the LLM-Verifier setting because the soft-max operation prevents a model from assigning zero probability to a sequence of tokens.
\begin{definition}
    For $\alpha\neq 0,1$, define
    \[S_\alpha(p):=(\sum_{y\in\cY}p^\alpha(y))^{\frac{1}{1-\alpha}}\, \text{ for }p\in \Delta(\cY).\]
    Since $\lim_{\alpha\to 0}S_\alpha(p)=|\text{supp}(p)|$ and $\lim_{\alpha\to 1}S_\alpha(p)=e^{-\sum_{y\in\cY}p(y)\log(p(y))}$, we define
    \[S_0(p):=|\text{supp}(p)| \text{ and }S_1(p):=e^{-\sum_{y\in\cY}p(y)\log(p(y))} \text{ for }p\in \Delta(\cY).\]
\end{definition}
For a distribution $p$, $(S_\alpha(p))_{\alpha> 0}$ is a family of surrogates for the support of $p$ that are also differentiable with respect to $p$. This family is closely connected to entropy-like measures: $S_1(p)=e^{H(p)}$ where $H(\cdot)$ is the Shannon Entropy and for $\alpha>0,\alpha\neq 1$, $S_\alpha(p)=e^{R_\alpha(p)}$ where $R_\alpha(\cdot)$ is the Renyi Entropy of order $\alpha$. For a further discussion on $(S_\alpha(\cdot))$ as a proxy for the support, see Appendix~\ref{app: discussion_s_alpha}.\\

\begin{theorem}
\label{th: modify_ce}
    Suppose $\cH$ satisfies Assumption~\ref{ass: finite_support} and Assumption~\ref{ass: lower_bound}.
    Consider the loss $l_{\lambda,\alpha}:\Delta(\cY)\times \cY\to \bar{\re}_+=[0,\infty]$,
    \[l_{\lambda,\alpha}(p,y) = -\log(p(y))+\lambda S_\alpha(p).\]
    Then the algorithm $\cA$ that, for a given sample $S=\set{(X_i,Y_i)}_{i=1}^m$, returns
    \[\cA(S)\in \arg\min_{h\in\cH}\frac{1}{m}\sum_{i=1}^m l_{\lambda_m,\alpha_m}(h(\cdot|X_i),Y_i),\]
    for sequences $(\lambda_m)$ and $(\alpha_m)$ s.t. $\lambda_m\uparrow\infty$ and $0<\alpha_m<1,\, \alpha_m\downarrow 0$, PAC-learns $\cH$.
\end{theorem}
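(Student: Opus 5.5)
The idea is to show that, for large $m$, the ER-CE objective behaves like ``minimize the average support size subject to covering every observed $Y_i$''. We then use finiteness of $\cR_\cH$ to turn this empirical property into a population bound on the mass placed outside $\text{supp}(h^*_x)\subset C_x$. Since $h^*\in\cH$ has $R(h^*)=0$, we have $\inf_{h\in\cH}R(h)=0$, so it suffices to show $R(\cA(S))\le\epsilon$ with probability at least $1-\delta$.

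\textbf{Step 1 (uniform approximation of $S_\alpha$).} Fix $p$ whose nonzero masses are all at least $\eta$, as guaranteed by Assumption~\ref{ass: lower_bound}, and let $k=|\text{supp}(p)|$. For $0<\alpha<1$, Rényi entropy is at most $\log k$, so $S_\alpha(p)\le k$. Since $p^\alpha(y)\ge\eta^\alpha$ on the support, we also have $S_\alpha(p)\ge (k\eta^\alpha)^{1/(1-\alpha)}$. Hence
\[
\sup_{p}\bigl|S_\alpha(p)-|\text{supp}(p)|\bigr|\le \gamma(\alpha):=\max_{k\le K}\bigl(k-(k\eta^\alpha)^{1/(1-\alpha)}\bigr),
\]
and $\gamma(\alpha)\to0$ as $\alpha\downarrow0$.

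\textbf{Step 2 (empirical consequences).} If $h(Y_i|X_i)=0$ for some $i$, the loss is $+\infty$. The true hypothesis $h^*$ has finite loss, so the output $\hat h=\cA(S)$ satisfies $Y_i\in\text{supp}(\hat h_{X_i})$ for all $i$. By Assumption~\ref{ass: lower_bound}, every such ``consistent'' hypothesis has cross-entropy term in $[0,\log(1/\eta)]$. Comparing the objective of $\hat h$ with that of $h^*$ and using Step 1 gives
\[
\frac1m\sum_i |\hat r_{X_i}|\le \frac1m\sum_i|r^*_{X_i}|+\frac{\log(1/\eta)}{\lambda_m}+2\gamma(\alpha_m),
\]
where $\hat r,r^*\in\cR_\cH$ are the support maps of $\hat h$ and $h^*$.

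\textbf{Step 3 (uniform convergence over the finite class $\cR_\cH$).} By Assumption~\ref{ass: finite_support}, we apply Hoeffding's inequality and a union bound over $r\in\cR_\cH$ to two quantities. The first is the bounded function $|r_x|-|r^*_x|\in[-K,K]$. The second is the indicator $1\{Y\notin r_X\}$; for this one we use the realizable bound, namely that an $r$ with $P(Y\notin r_X)>\epsilon'$ is consistent with the sample with probability at most $(1-\epsilon')^m$. On the resulting good event, with $\epsilon'$ and the deviation $\tau$ of order $\sqrt{\log(|\cR_\cH|/\delta)/m}$, we obtain two bounds:
\[
\bE_X\bigl[|\hat r_X|-|r^*_X|\bigr]\le \tau+\frac{\log(1/\eta)}{\lambda_m}+2\gamma(\alpha_m),\qquad \bP\bigl(Y\notin\hat r_X\bigr)\le\epsilon'.
\]

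\textbf{Step 4 (from supports to risk).} Write $|\hat r_x|-|r^*_x|=|\hat r_x\setminus r^*_x|-|r^*_x\setminus\hat r_x|$. Because $h^*$ puts mass at least $\eta$ on each point of $r^*_x$, we have $\bP(Y\notin\hat r_X)\ge\eta\,\bE|r^*_X\setminus\hat r_X|$. Combining this with Step 3 gives
\[
\bE|\hat r_X\setminus r^*_X|\le \tau+\frac{\log(1/\eta)}{\lambda_m}+2\gamma(\alpha_m)+\frac{\epsilon'}{\eta}.
\]
Since $r^*_x\subset C_x$, the risk satisfies $R(\hat h)\le\bP_X(\hat r_X\not\subset r^*_X)\le\bE|\hat r_X\setminus r^*_X|$. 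Because $\lambda_m\uparrow\infty$ and $\alpha_m\downarrow0$, every term can be made at most $\epsilon/4$ by taking $m\ge m(\epsilon,\delta)$, and this threshold depends on the fixed sequences, $\eta$, $K$ and $|\cR_\cH|$ but not on $\bD$ or $h^*$.

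\textbf{Main obstacle.} The delicate step is Step 2 together with Step 4. Comparing the objective against $h^*$ only controls average support size, not containment, and the CE term can trade off against the penalty by up to $\log(1/\eta)/\lambda_m$. The key trick is the inequality $\bP(Y\notin\hat r_X)\ge\eta\,\bE|r^*_X\setminus\hat r_X|$, which converts the consistency guarantee into control of the ``missing'' part of the support, so that average-size control yields control of the ``extra'' part $\hat r_X\setminus r^*_X$ that carries the verifier risk. I would verify the uniform bound on $\gamma(\alpha)$ in Step 1 carefully, since it is what makes the surrogate $S_{\alpha_m}$ interchangeable with support size.
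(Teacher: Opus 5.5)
Your proposal is correct and follows the paper's proof essentially step for step. The ingredients are the same: the infinite-loss argument forcing $\hat h$ to cover every $Y_i$, the comparison against $h^*$ that yields the $\log(1/\eta)/\lambda_m$ slack, a lemma-type uniform approximation of $S_{\alpha_m}$ by support size, and the decomposition $|\hat r_x\setminus r^*_x| = |\hat r_x|-|r^*_x| + |r^*_x\setminus \hat r_x|$ with $|r^*_x\setminus\hat r_x|\le \frac{1}{\eta}\bP_{Y\sim h^*(\cdot|x)}(Y\notin \hat r_x)$, followed by a union bound over the finite class $\cR_\cH$. The differences are cosmetic: you handle the coverage term with the realizable bound $(1-\epsilon')^m$ instead of Hoeffding, which gives a slightly better rate, and your constant $\gamma(\alpha)$ is at most the paper's $\frac{1}{\eta}\left(1-\eta^{\frac{\alpha}{1-\alpha}}\right)$.
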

The full proof is presented in Appendix~\ref{app: pf_modify_ce}.
\begin{proof}[Proof Sketch]
The proof proceeds through three steps, progressively replacing $1_{\set{y\notin \text{supp}(p)}}$ with $-\log(p(y))$ and $|\text{supp}(p)|$ with $S_\alpha(p)$.
\begin{proofstep}{Step 1: Indicator function + Support Penalty}
    Show that $\cH$ is PAC-learnable with the loss:
\[l(p,y) = \frac{1}{\eta}1_{\set{y\notin \text{supp}(p)}}+|\text{supp}(p)|.\]
\end{proofstep}
\begin{proofstep}{Step 2: Cross-Entropy + Support Penalty}
    Show that $\cH$ is PAC-learnable with the loss:
    \[l_\lambda(p,y) = -\log(p(y))+\lambda|\text{supp}(p)|,\]
    if $(\lambda_m)$ is chosen s.t. $\lambda_m\uparrow \infty$.
\end{proofstep}
\begin{proofstep}{Step 3: Cross-Entropy + Entropy Penalty}
    Show that $\cH$ is PAC-learnable with the loss:
    \[l_{\lambda,\alpha}(p,y) = -\log(p(y))+\lambda|S_\alpha(p)|,\]
    if $(\lambda_m),(\alpha_m)$ are chosen s.t. $\lambda_m\uparrow \infty$ and $\alpha_m\downarrow 0$.
\end{proofstep}
    We present the proof idea for \textbf{Step 1}, as it illustrates why adding a support (or support-like) penalty to losses that depend only on the observed label, such as $1_{\set{y\in \text{supp}(p)}}$ or $-\log(p(y))$), yields learnability guarantees that the un-regularized losses otherwise lack.\\
    The proof relies on two key ideas:
    \begin{enumerate}
        \item[\textbf{(1)}] Denote the expected risk of the loss $l$ by $\Tilde{R}$. Minimizing the difference $\Tilde{R}(h)-\Tilde{R}(h^*)$ is sufficient for minimizing the verifier risk.
        \item[\textbf{(2)}] The loss $l(p,y)$, which depends only on the underlying supports of the hypotheses, will generalize as the sample size increases.
    \end{enumerate}
    Throughout we will use the notation $r_h(x):=\text{supp}(h(\cdot|x))$. For every $x\in\cX$,
    \begin{align*}
        \sum_{y\notin r_{h^*}(x)}h(y|x) \leq \left|r_h(x)\setminus r_{h^*}(x)\right|= \left|r_{h^*}(x)\setminus r_h(x)\right|+\left|r_h(x)\right|-\left|r_{h^*}(x)\right|.
    \end{align*}
    and
    \begin{align*}
    \left|r_{h^*}(x)\setminus r_h(x)\right| = \sum_{y\notin r_h(x)}1_{\set{y\in r_{h^*}(x)}}
        \leq \sum_{y\notin r_h(x)}\frac{h^*(y|x)}{\eta}
        = \frac{1}{\eta}\bP_{Y\sim h^*(\cdot|x)}\left(Y\notin r_h(x)\right).
    \end{align*}
    where the inequality follows from Assumption~\ref{ass: lower_bound}. Together, the two inequalities allows us to bound the verifier-risk by a surrogate risk for any hypothesis $h\in\cH$:
    \[R(h)-\underbrace{R(h^*)}_{=0} \leq \Tilde{R}(h)-\Tilde{R}(h^*),\] 
    where
    \[\Tilde{R}(h):=\bE_{\bD,h^*}\left[\frac{1}{\eta}1_{\set{Y\notin r_h(X)}}\right]+\bE_{\bD}[\left|r_h(X)\right|.\]
    This shows \textbf{(1)}. It remains to show \textbf{(2)}: that the minimizer of the empirical loss $l$, PAC-learns $\cH$ under the surrogate risk. The loss function class $L_\cH:=\set{(x,y)\mapsto l(h(\cdot|x),y):h\in \cH}$ is finite by Assumption~\ref{ass: finite_support} and the loss function $l$ can be uniformly bounded for all $x\in\cX,y\in\cY$ and $h\in\cH$ by $\frac{2}{\eta}$.  A standard application of Hoeffding's Inequality gives PAC-learnability with sample complexity
    \[m(\epsilon,\delta)=\left\lceil\frac{8}{\eta^2\epsilon^2}\log\left(\frac{2|\cR|}{\delta}\right)\right\rceil.\]
    This transformation is crucial because even though $\cH$ may be infinite and cross-entropy may overfit to the sample, the class of functions $L_\cH$ is finite and will generalize as the sample size increases. 
\end{proof}
The following result immediately follows from Theorem~\ref{th: modify_ce}.
\begin{corollary}
    If we set $\lambda_m=m$ and $\alpha=1/m$, then the algorithm $\cA$ that, for a given sample $S=\set{(X_i,Y_i)}_{i=1}^m$, returns
    \[\cA(S)\in \arg\min_{h\in\cH}\frac{1}{m}\sum_{i=1}^m l_{\lambda_m,\alpha_m}(h(\cdot|X_i),Y_i),\]
    PAC-learns $\cH$, with sample complexity
    \[m(\epsilon,\delta)=\left\lceil \max\left\{\frac{32}{\eta^2\epsilon^2}\log\left(\frac{3|\cR|}{\delta}\right),\frac{4}{\epsilon}\log\left(\frac{1}{\eta}\right),1+\frac{\log\eta}{\log\left(1-\frac{\eta\epsilon}{4}\right)}\right\} \right\rceil.\]
\end{corollary}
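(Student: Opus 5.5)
The plan is to instantiate the three-step argument of Theorem~\ref{th: modify_ce} with explicit constants, tracking how $\lambda_m=m$ and $\alpha_m=1/m$ feed into each error term. I split the excess verifier risk into three contributions and allot $\epsilon/4$ (or so) to each: a statistical term, a cross-entropy term, and an approximation term from replacing $|\text{supp}(p)|$ by $S_{\alpha}(p)$. Setting $m$ at least the maximum of the three requirements then makes each contribution small.

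\textbf{Statistical term.} Step~1 gives $R(h)\le \Tilde{R}(h)-\Tilde{R}(h^*)$, where $\Tilde{R}$ is the population surrogate risk built from support indicators and support sizes. Since the loss depends only on $r_h\in\cR$, there are at most $|\cR|$ distinct loss functions, each bounded by $2/\eta$. Hoeffding's inequality plus a union bound then puts every empirical surrogate risk within $\epsilon/4$ of its mean. Shrinking the accuracy by a constant factor and splitting $\delta$ across one extra event (the one involving $h^*$'s empirical log-loss) should explain the constants $32$ and $3|\cR|$ in the first term.

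\textbf{Cross-entropy term.} For the empirical minimizer $\hat h$, compare $\frac1m\sum_i[-\log \hat h(Y_i|X_i)+\lambda_m S_{\alpha_m}(\hat h_{X_i})]$ with the same quantity for $h^*$. The log-loss of $h^*$ lies in $[0,\log(1/\eta)]$ by Assumption~\ref{ass: lower_bound}. If $\hat h$ assigns zero mass to some sample point, its objective is infinite, so $\hat h$ covers every $Y_i$ and the indicator part of the empirical surrogate vanishes. Dividing the inequality by $\lambda_m$ shows that the empirical mean of $S_{\alpha_m}(\hat h)$ exceeds that of $S_{\alpha_m}(h^*)$ by at most $\log(1/\eta)/\lambda_m=\log(1/\eta)/m$. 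Requiring this to be at most $\epsilon/4$ gives the second term $\frac{4}{\epsilon}\log(1/\eta)$.

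\textbf{Approximation term.} This is where I expect the main obstacle: bounding $S_{\alpha}(p)$ against $|\text{supp}(p)|$ uniformly under Assumption~\ref{ass: lower_bound}. For $k=|\text{supp}(p)|$, the upper bound $S_\alpha(p)\le k$ holds since Rényi entropy is at most $\log k$. For the lower bound, each nonzero $p(y)\ge\eta$ gives $p(y)^\alpha\ge p(y)\,\eta^{\alpha-1}$. Hence $\sum p^\alpha$ is at least roughly $k\eta^{\alpha}$, or more usefully at least $\eta^{\alpha-1}$, and after raising to the power $1/(1-\alpha)$ we get $S_\alpha(p)\ge k\,\eta^{\alpha/(1-\alpha)}$. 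What remains is to choose $\alpha=1/m$ small enough that the multiplicative loss is at most $\epsilon/4$ after normalization. The target requirement is
\[
  \eta^{\alpha/(1-\alpha)}\ \ge\ 1-\frac{\eta\epsilon}{4},
\]
and with $\alpha=1/m$ it rearranges to
\[
  m\ \ge\ 1+\frac{\log\eta}{\log\bigl(1-\frac{\eta\epsilon}{4}\bigr)},
\]
which is the third term. The factor $\eta$ inside comes from the fact that support differences are weighted by $1/\eta$ in the surrogate, and $|\cY|$-type factors are absorbed since supports are bounded.

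Finally, chain the three bounds: true surrogate excess $\le$ empirical excess $+\epsilon/4$ (union bound), empirical support excess $\le$ empirical $S_\alpha$ excess up to the approximation error, which is $\le\log(1/\eta)/m$. This gives $R(\hat h)\le\epsilon$ with probability at least $1-\delta$, using $\inf_h R(h)=R(h^*)=0$.
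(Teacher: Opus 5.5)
Your route is essentially the paper's: the Step~1 bound $R(h)\le\Tilde{R}(h)-\Tilde{R}(h^*)$, then the minimizer comparison. That comparison makes both empirical indicator terms vanish and gives an empirical $S_\alpha$-excess of at most $\log(1/\eta)/\lambda_m$. Lemma~\ref{lem: S_alpha_supp} then converts back to support sizes. The one structural difference is the concentration step. You apply a single two-sided Hoeffding bound to the combined surrogate loss $\frac{1}{\eta}1_{\set{y\notin\text{supp}(p)}}+|\text{supp}(p)|$, whose range is at most $2/\eta$. The paper's proof of Theorem~\ref{th: step3} instead uses a one-sided deviation for the indicators at level $\eta\epsilon/4$ and a two-sided deviation for the support sizes at level $\epsilon/8$. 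That split is where $3|\cR|$ comes from: $|\cR|$ one-sided events plus $2|\cR|$ two-sided ones. There is no extra event for $h^*$'s empirical log-loss; as you note yourself, it is deterministically at most $\log(1/\eta)$.

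The argument goes through, but three steps are wrong as written.

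(i) The inequality $p(y)^\alpha\ge p(y)\eta^{\alpha-1}$ is reversed. For $p(y)\ge\eta$ and $\alpha-1<0$ one has $p(y)^{\alpha-1}\le\eta^{\alpha-1}$. Hence $\sum_y p(y)^\alpha\ge\eta^{\alpha-1}$ is false; a point mass violates it. It is actually an upper bound, since $S_\alpha(p)\le|\text{supp}(p)|\le 1/\eta$. The correct step is $p(y)^\alpha\ge\eta^\alpha$ on the support, so $\sum_y p(y)^\alpha\ge k\eta^\alpha$. Then $S_\alpha(p)\ge k^{1/(1-\alpha)}\eta^{\alpha/(1-\alpha)}\ge k\eta^{\alpha/(1-\alpha)}$ using $k\ge1$, exactly as in Lemma~\ref{lem: S_alpha_supp}.

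(ii) The $\eta$ in $1-\eta\epsilon/4$ does not come from a $1/\eta$ weighting of support sizes; the surrogate weights only the indicator by $1/\eta$. It does not come from $|\cY|$ either. It comes from turning the multiplicative loss $k(1-\eta^{\alpha/(1-\alpha)})$ into the additive bound $\frac{1}{\eta}(1-\eta^{\alpha/(1-\alpha)})$ via $k\le1/\eta$. Requiring that bound to be at most $\epsilon/4$ gives your displayed condition.

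(iii) The statistical contribution is $2\cdot\epsilon/4=\epsilon/2$, not $\epsilon/4$. Both $\Tilde{R}(\hat h)-\Tilde{R}^m(\hat h)$ and $\Tilde{R}^m(h^*)-\Tilde{R}(h^*)$ appear. The budget is therefore $\epsilon/2+\epsilon/4+\epsilon/4=\epsilon$, and you must not shrink the deviation level below $\epsilon/4$ if you want the constant $32$. With this bookkeeping your joint-loss Hoeffding gives $\bP(R(\hat h)>\epsilon)\le 2|\cR|e^{-m\eta^2\epsilon^2/32}$. That is sample complexity $\frac{32}{\eta^2\epsilon^2}\log\bigl(\frac{2|\cR|}{\delta}\bigr)$, which the stated first term dominates. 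So the corollary follows, with a marginally better constant than the paper's split gives.
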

\section{Experiments}
\label{sec: exp}
We validate our theory by fine-tuning an LLM with entropy-regulated loss and evaluating it in two settings: math problems (Section~\ref{subsec: math_problems} and Section~\ref{sec:math7b}) and code generation (Section~\ref{subsec: code_generation}). These two settings closely align with our theoretical setup. At training, for a given problem, we are provided with only one out of many possible solutions.
Concretely, given a question-answer pair $(x,\mathbf{y})$, where $\mathbf{y}=(y_1,\dots,y_T)$, and model $\pi$, the loss is given by:
\[\frac{1}{T}\sum_{i=1}^T -\log\left(\pi(y_i|x,y_{1:i-1})\right) +\frac{\lambda}{T}H\left(\pi(\cdot|x,y_{1:i-1})\right),\]
where we use the convention that $y_{1:i}=(y_1,\dots,y_i)$ with $y_{1:0}=\phi$.\\

For solving math problems, this means that we're shown one possible approach to obtain the final numerical answer. In code generation, we're shown one correct Python program for a given text description of a program. At test time, the model is evaluated on whether it produces a solution that passes the verifier, not whether it matches the exact solution in the dataset. Specifically, for the math problems task, the model's success depends on the final numerical solution being correct and is agnostic to the reasoning trace. In code generation, the synthesized program is only required to pass the test cases and be functionally correct. We include experimental details in Appendices~\ref{sec:appendix_setup}, \ref{sec:appendix_mbpp_setup}, and \ref{app:math} and relevant ablations in Appendices~\ref{app:sampled}, \ref{app:renyi}, and \ref{app:answeronly}. These provide additional support for our proposed mechanism. When CoT reasoning traces are replaced with answer-only targets, removing multiple paths to the correct answers, gains largely disappear (Appendix~\ref{app:answeronly}). Under temperature decoding, which is closer to our theoretical setup, gains are even larger than under the greedy decoder we use in this section (Appendix~\ref{app:sampled}). Finally, improvement is robust across a range of Renyi order parameters $\alpha \in [0.5, 2]$ (Appendix~\ref{app:renyi}).

\subsection{Math Problems}
We evaluate the utility our loss on the GSM8K dataset \citep{cobbe2021gsm8k}, which contains high-school level math problems where the correct answer is always a number. The training dataset which consists of (problem, Chain-of-Thought answer) pairs is used to fine-tune the model and the test set is used to verify whether the model has given a correct to a problem by checking if the solution boxes the correct numerical answer.

We fine-tune a Qwen2.5-1.5B-Instruct model \citep{yang2024qwen2} with our entropy-regularized loss using Low-Rank Adaptation (LoRA) \citep{hu2022lora} across a grid of penalty strengths. Performance is evaluated using the Pass@1 coverage metric. 
\begin{figure}[h]
    \centering

    \begin{adjustbox}{valign=t}
    \begin{minipage}{0.52\textwidth}
        \centering
        \includegraphics[width=\linewidth]{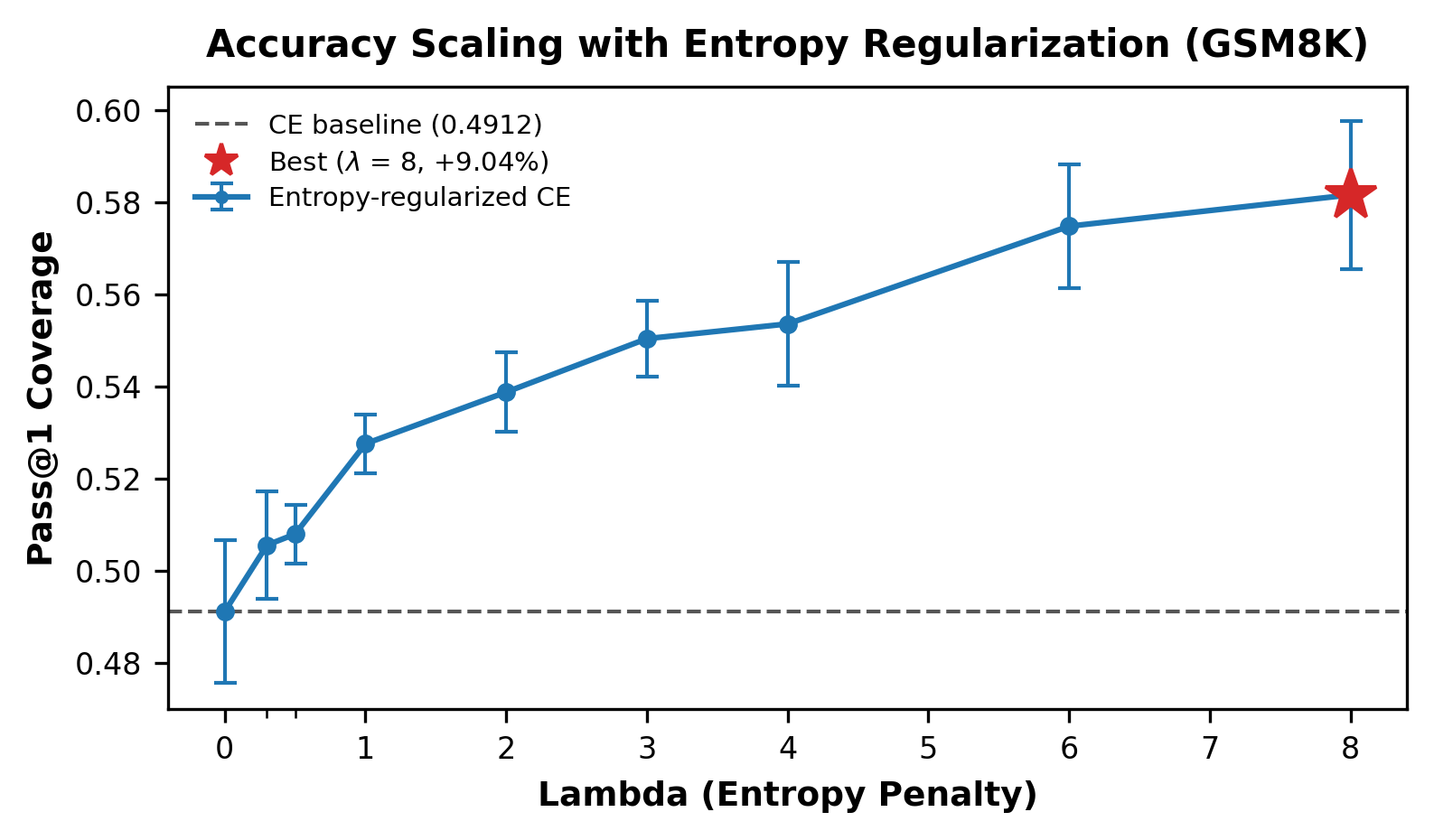}
        \caption{\textbf{Accuracy monotonically increases with penalty strength.} Pass@1 accuracy on GSM8K (CoT) versus entropy penalty $\lambda$. Error bars show $\pm 1$ standard deviation over 5 seeds.}
        \label{fig:accuracy_scaling}
    \end{minipage}
    \end{adjustbox}
    \hfill
    \begin{adjustbox}{valign=t}
    \begin{minipage}{0.43\textwidth}
        \centering
        \small

        \begin{tabular}{lcc}
            \toprule
            $\lambda$ & \textbf{Mean} & \textbf{Std. Dev.} \\
            \midrule
            $0.0$ (Baseline) & 0.4912 & $\pm 0.0154$ \\
            $0.3$            & 0.5056 & $\pm 0.0116$ \\
            $0.5$            & 0.5080 & $\pm 0.0064$ \\
            $1.0$            & 0.5276 & $\pm 0.0064$ \\
            $2.0$            & 0.5388 & $\pm 0.0086$ \\
            $3.0$            & 0.5504 & $\pm 0.0083$ \\
            $4.0$            & 0.5536 & $\pm 0.0134$ \\
            $6.0$            & 0.5748 & $\pm 0.0134$ \\
            $\mathbf{8.0}$   & \textbf{0.5816} & $\mathbf{\pm 0.0161}$ \\
            \bottomrule
        \end{tabular}

        \captionof{table}{\textbf{Accuracy peaks $\approx9\%$ over standard cross entropy.} GSM8K Pass@1 accuracy under varying entropy penalties,
        aggregated over 5 seeds.}
        \label{tab:entropy_main_results}
    \end{minipage}
    \end{adjustbox}
\end{figure}
\label{subsec: math_problems}

\paragraph{Results.} As seen in Figure~\ref{fig:accuracy_scaling} and Table~\ref{tab:entropy_main_results}, performance monotonically increases with the strength of the penalty among the $\lambda$ we tested. At its peak, ER-CE outperforms standard CE by around $9\%$ at no added training cost.
\subsection{Code Generation}
We further evaluate the empirical performance of our loss on the Mostly Basic Python Problems (MBPP) dataset \citep{austin2021program}. This dataset contains introductory-level Python programming tasks where correctness is verified dynamically by executing the generated code against a suite of hidden `assert` statements. 

We fine-tune a Qwen2.5-1.5B-Instruct model \citep{yang2024qwen2} with our entropy-regularized loss using Low-Rank Adaptation (LoRA) \citep{hu2022lora} across a grid of penalty strengths ($\lambda \in [0.0, 6.0]$). Performance is evaluated using Greedy Pass@1 accuracy over a sandboxed execution environment.

\begin{figure}[h]
    \centering

    \begin{adjustbox}{valign=t}
    \begin{minipage}{0.52\textwidth}
        \centering
        \includegraphics[width=\linewidth]{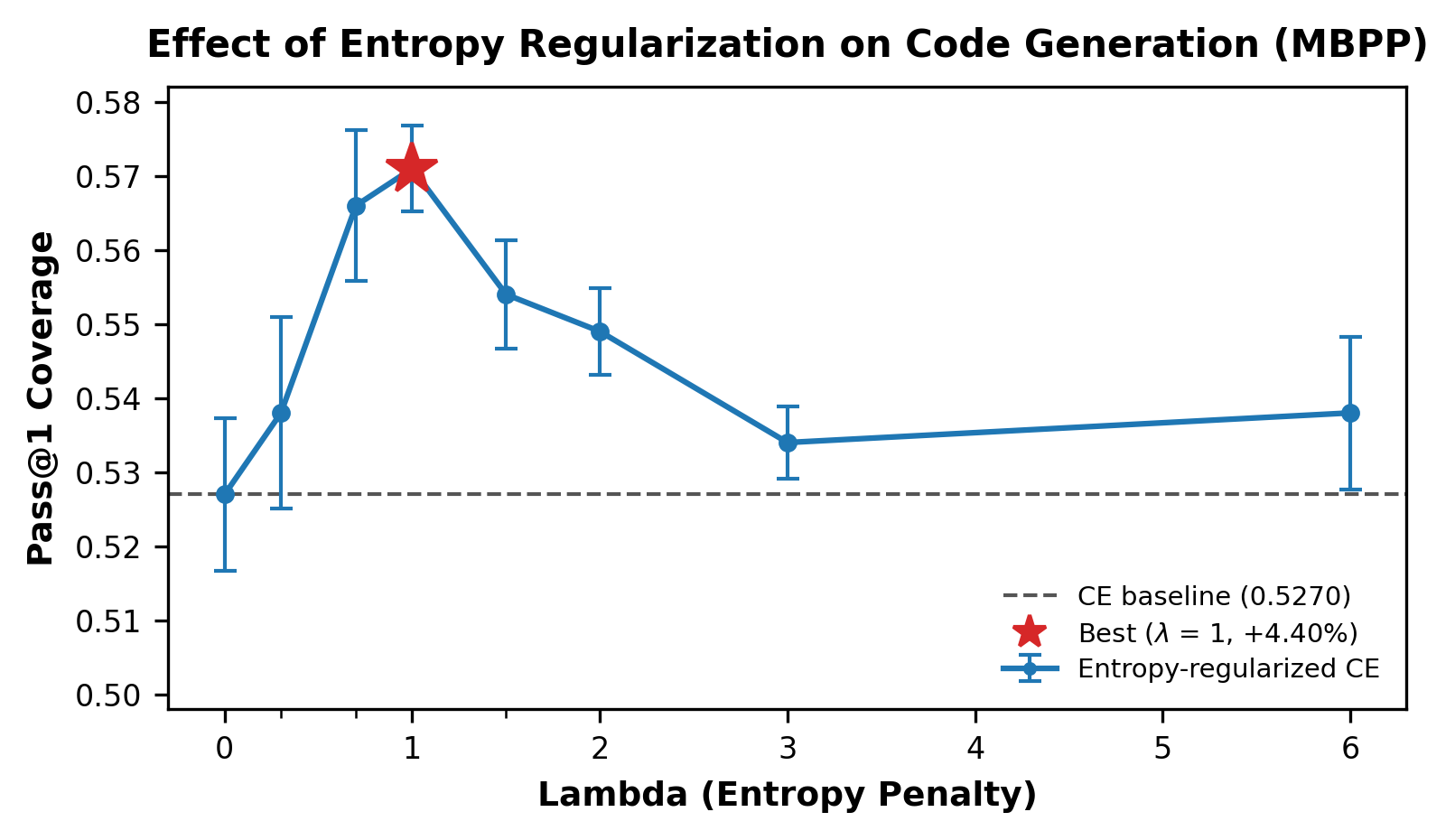}
        \caption{\textbf{ER-CE uniformly beat out CE across all $\lambda$.} Greedy Pass@1 accuracy on MBPP versus entropy penalty $\lambda$. Error bars show $\pm 1$ standard deviation over 5 seeds.}
        \label{fig:mbpp_entropy_curve}
    \end{minipage}
    \end{adjustbox}
    \hfill
    \begin{adjustbox}{valign=t}
    \begin{minipage}{0.43\textwidth}
        \centering
        \small

        \begin{tabular}{lcc}
            \toprule
            $\lambda$ & \textbf{Mean} & \textbf{Std. Dev.} \\
            \midrule
            $0.0$ (Baseline) & 0.5270 & $\pm 0.0103$ \\
            $0.3$            & 0.5380 & $\pm 0.0129$ \\
            $0.7$            & 0.5660 & $\pm 0.0102$ \\
            $\mathbf{1.0}$            & \textbf{0.5710} & $\mathbf{\pm 0.0058}$ \\
            $1.5$            & 0.5540 & $\pm 0.0073$ \\
            $2.0$            & 0.5490 & $\pm 0.0058$ \\
            $3.0$            & 0.5340 & $\pm 0.0049$ \\
            $6.0$            & 0.5380 & $\pm 0.0103$ \\
            \bottomrule
        \end{tabular}

        \captionof{table}{\textbf{ER-CE outperforms standard CE by $4.4\%$ at $\lambda=1$.} MBPP greedy Pass@1 accuracy under varying entropy penalties,
        aggregated over 5 seeds.}
        \label{tab:mbpp_entropy_main_results}
    \end{minipage}
    \end{adjustbox}
\end{figure}
\label{subsec: code_generation}

\paragraph{Results.} As shown in Figure~\ref{fig:mbpp_entropy_curve}, applying entropy regularization yields a robust, statistically significant improvement over the standard Cross-Entropy baseline ($\lambda = 0.0$). At the baseline ($\lambda = 0.0$), the model achieves a Pass@1 accuracy of $0.5270 \pm 0.0103$. As the entropy penalty increases, performance climbs steeply, peaking at $\lambda = 1.0$ with an absolute gain of $+4.40\%$, reaching $0.5710 \pm 0.0058$. However, continuing to crank up the penalty induces over-sharpening. At $\lambda = 6.0$, the accuracy to degrade back to baseline levels ($0.5380 \pm 0.0103$).

\subsection{Scaling: Harder Problems and a Larger Model}
\label{sec:math7b}

The experiments above use a 1.5B model on an easier benchmark. Two questions remain. First, does the effect survive on a harder benchmark, where the verifier is correspondingly less forgiving? Second, does it survive at a scale where the base model has substantially richer pretrained representations?

We evaluate on MATH \citep{hendrycks2021measuring}, a competition-mathematics benchmark, using \texttt{Qwen2.5-7B-Instruct} \citep{yang2024qwen2}. As before, we fine-tune with LoRA \citep{hu2022lora} across a grid of penalty strengths and evaluate. Correctness is determined by symbolic equivalence to the reference answer rather than string equality, so a model is credited for any expression denoting the correct value.
\begin{figure}[h]
    \centering

    \begin{adjustbox}{valign=t}
    \begin{minipage}{0.52\textwidth}
        \centering
        \includegraphics[width=\linewidth]{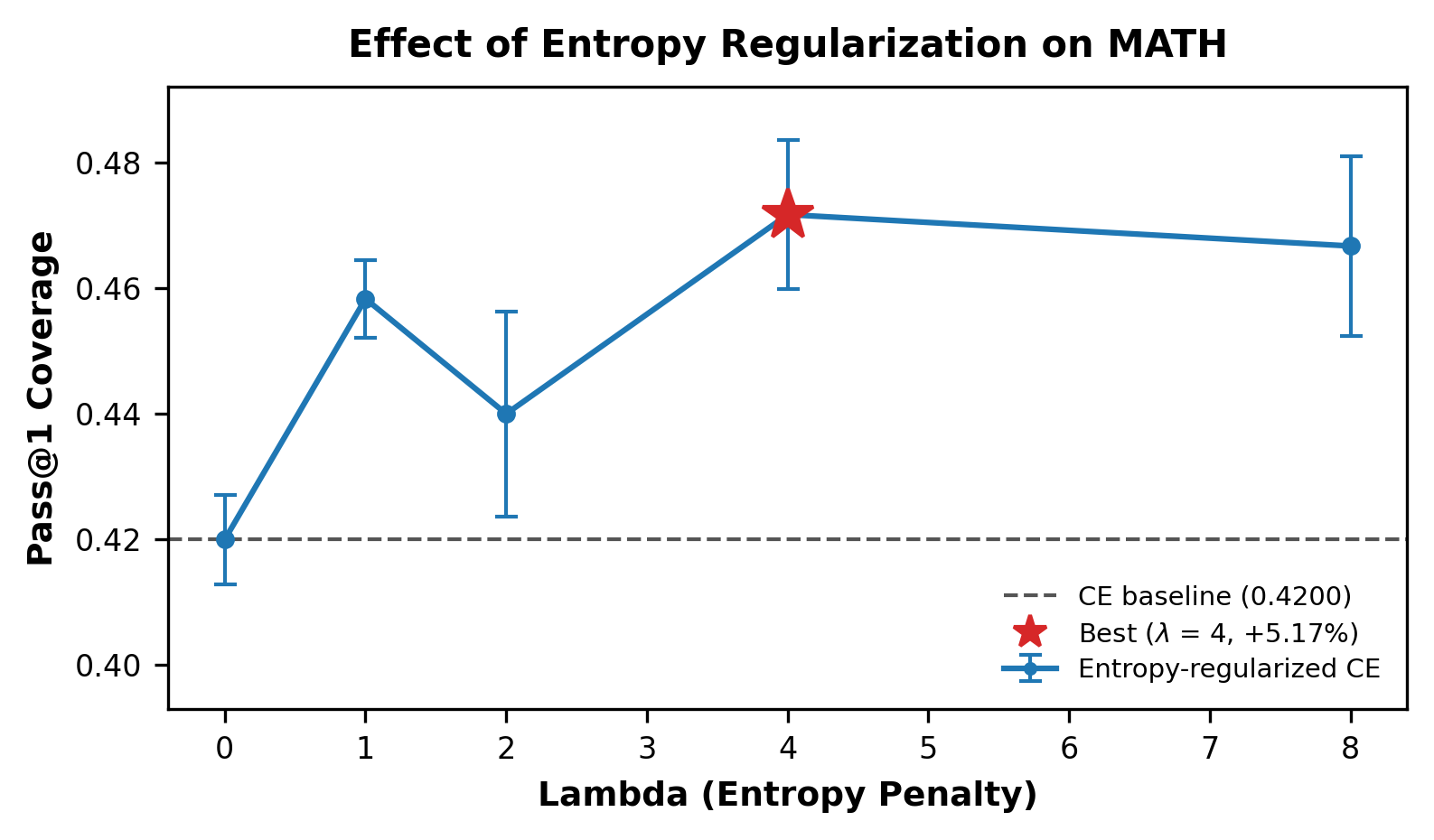}
        \caption{\textbf{Performance gain from ER-CE uniformly remains even for a 7B model.} Greedy Pass@1 accuracy on MATH versus entropy penalty $\lambda$ for
        \texttt{Qwen2.5-7B-Instruct}. Error bars show $\pm 1$ standard deviation over
        3 seeds; shading denotes the baseline interval.}
        \label{fig:math7b}
    \end{minipage}
    \end{adjustbox}
    \hfill
    \begin{adjustbox}{valign=t}
    \begin{minipage}{0.43\textwidth}
        \centering
        \small

        \begin{tabular}{lcc}
            \toprule
            $\lambda$ & \textbf{Mean} & \textbf{Std. Dev.} \\
            \midrule
            $0.0$ (Baseline) & 0.4200 & $\pm 0.0071$ \\
            $1.0$            & 0.4583 & $\pm 0.0062$ \\
            $2.0$            & 0.4400 & $\pm 0.0163$ \\
            $4.0$            & \textbf{0.4717} & $\pm \textbf{0.0118}$ \\
            $8.0$            & 0.4667 & $\pm 0.0143$ \\
            \bottomrule
        \end{tabular}

        \captionof{table}{\textbf{ER-CE peaks over normal CE by $\approx5\%$}. MATH greedy Pass@1 accuracy across entropy penalties,
        aggregated over 3 seeds.}
        \label{tab:math7b}
    \end{minipage}
    \end{adjustbox}
\end{figure}

\paragraph{Results.}
Every regularized configuration outperforms the cross-entropy baseline, and the separation is well outside seed variation: the baseline reaches $0.4200 \pm 0.0071$ while the weakest penalty we tried already reaches $0.4583 \pm 0.0062$. The largest gain is $+5.17\%$ absolute at $\lambda = 4$. The regularization clearly improved accuracy, providing evidence of robustness to scaling.

\section{Conclusion}
We study the mismatch between imitation-based training objectives and verifier accuracy in tasks with multiple correct solutions. Although cross-entropy is a natural objective when the goal is to reproduce an expert distribution, it can be poorly aligned to generate any output accepted by a verifier. We make this mismatch precise through a learning-theoretic counterexample and show that controlling the support of the learned policy can mitigate the resulting failure mode. Motivated by this perspective, we propose entropy-regularized cross-entropy, which uses token-level Shannon entropy as a tractable and differentiable proxy for support size. Our theoretical results identify settings in which this objective improves upon standard cross-entropy minimization, while our experiments on mathematical reasoning and code generation result in improved verifier accuracy with negligible additional training overhead.

Two important directions remain for future work. First, efficiently selecting the regularization strength is an important practical challenge, since the minimum nonzero probability is typically unknown. Understanding how verifier accuracy depends on the regularization strength requires a deeper analysis of the optimization procedure, which is beyond the scope of this work. Second, we consider only Pass@1, the probability that a single model-generated response is correct. More advanced inference-time strategies, such as tree search, majority voting, and Best-of-N sampling, introduce different objectives that may require different post-training methods. An important open question is whether entropy minimization remains beneficial under these strategies, or whether alternative regularization objectives are better suited to them.

\clearpage
\bibliographystyle{plainnat}
\bibliography{references}
\clearpage

\appendix

\section{Proof of Proposition~\ref{prop: counter_example}}
\label{app: pf_counter}
Consider the following concrete example: $\cX=\bN, \cY=\set{1,2,3}$ and $\cG:=\set{1,2}^\cX$. Define the function $u:\cX\to \Delta(\cY)$ by
\[u(\cdot|x)=(1/2,1/2,0),\quad \forall\, x\in \cX.\]
For each $g\in \cG$, define the function $h^g:\cX\to\Delta(\cY)$ by
\begin{align*}
    h^g(\cdot|x)=
    \begin{cases}
        (\frac{5}{8},\frac{1}{8},\frac{1}{4}) &\text{ if }g(x)=1\\
        (\frac{1}{8},\frac{5}{8},\frac{1}{4}) &\text{ if }g(x)=2.
    \end{cases}
\end{align*}
Denote $\cH:=\set{u}\cup \set{h^g}_{g\in \cG}\subset (\Delta(\cY))^\cX$. Let $m\in \bN$ and $N_m\in \bN$ be a constant (that we will fix later). Let $\bD\equiv \text{Unif}([N_m])$ and $h^*=u$. Recall that a sample $\set{(X_i,Y_i)}_{i=1}^m$ is generated as follows: independently, for all $i\in [m]$,
\[X_i\sim \bD \text{ and }Y_i|X_i\sim h^*(\cdot|X_i).\]
This induces a distribution $\bP$ over $(\cX\times \cY)^m$. Denote the event
\[B_m:=\set{X_i \text{ are distinct }\forall\, i\in [m]}.\]
Then,
\begin{align*}
    \bP(B_m) &= 1\times \frac{N_m-1}{N_m}\times \frac{N_m-2}{N_m}\times\cdots \times \frac{N_m-(m-1)}{N_m}\\
    &= \prod_{i=1}^m \left(1-\frac{i}{N_m}\right)\\
    &>\prod_{i=1}^{m-1} e^{-\frac{i}{N_m}-\frac{i^2}{N_m^2}}\\
    &= e^{-\frac{m(m-1)}{2N_m}}e^{-\frac{m(m-1)(2m-1)}{6N_m^2}}.
\end{align*}
For $N_m=\lceil \frac{m(m-1)}{2}\rceil$,
\begin{align*}
    e^{-\frac{m(m-1)}{2N_m}}e^{-\frac{m(m-1)(2m-1)}{6N_m^2}} &\geq e^{-\frac{2m(m-1)}{2m(m-1)}}e^{-\frac{4m(m-1)(2m-1)}{6m^2(m-1)^2}}\\
    &= e^{-1}e^{-\frac{2(2m-1)}{3m(m-1)}}\\
    &\geq e^{-\frac{3}{2}}\\
    &>\frac{1}{8}.
\end{align*}
So, if $N_m=\lceil \frac{m(m-1)}{2}\rceil$, $\bP(B_m)>\frac{1}{8}$. Now, if $B_m$ holds, we receive a sample $S$ with all $X_i$ distinct. So, $\exists\, g_S\in \cG$ s.t. $\forall\, (X_i,Y_i)\in S,\, g_S(X_i)=Y_i$, because $Y_i|X_i\sim u_{X_i}$ and $\text{supp}(u_{X_i})=\set{1,2}$). This gives:
\[\frac{1}{m}\sum_{i=1}^m -\log\left(h^{g_S}(Y_i|X_i)\right) = -\log(5/8)<-\log(1/2)=\frac{1}{m}\sum_{i=1}^m -\log\left(u(Y_i|X_i)\right).\]
If $\cA^{\text{CE}}$ denotes the algorithm that, given a sample $S=\set{(X_i,Y_i)}_{i=1}^m$, returns the empirical risk minimizer (ERM):
\[\arg\min_{h\in \cH}\frac{1}{m}\sum_{i=1}^m -\log(h(Y_i|X_i)),\]
then $\cA^{\text{CE}}(S)\neq u$.
For all $h\in \set{h_g}_{g\in \cG}$, $R(h)=\bE_{X\sim \bD}[h_X(y_3)]=\frac{1}{4}$ and $R(u)=0$. Hence, for every $m$, $\exists\, h^*\in \cH$ and $\bD$ on $\cX$ s.t.
\[\bP\left(R\left(\cA^{\text{CE}}(S)\right)>\inf_{h\in \cH}R(h)+\frac{1}{8}\right) \geq \bP(B_m)>\frac{1}{8},\]
because $\inf_{h\in \cH}R(h)=0$. This shows that $\cA^{\text{CE}}$ does not PAC-learn $\cH$ with the verifier-risk.
\section{Proof of Theorem~\ref{th: modify_ce}}
\label{app: pf_modify_ce}
\subsection{Step 1}
\begin{theorem}
    Suppose $\cH$ satisfies Assumption~\ref{ass: finite_support} and Assumption~\ref{ass: lower_bound}.
    Consider the loss $l:\Delta(\cY)\times \cY\to \re$,
    \[l(p,y) = \frac{1}{\eta}1_{\set{y\notin \text{supp}(p)}}+|\text{supp}(p)|.\]
    Then the algorithm $\cA$ that, for a given sample $S=\set{(X_i,Y_i)}_{i=1}^m$, returns
    \[\cA(S)\in \arg\min_{h\in\cH}\frac{1}{m}\sum_{i=1}^m l(h(\cdot|X_i),Y_i),\]
     PAC-learns $\cH$.
     \label{th: step1}
\end{theorem}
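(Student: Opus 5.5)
The argument has two parts. First, a deterministic comparison shows that the excess surrogate risk dominates the verifier risk. Second, a uniform convergence argument over the finite support class $\cR_\cH$ shows that empirical minimization of $l$ controls that excess surrogate risk. Throughout, write $r_h(x):=\text{supp}(h(\cdot|x))$ and
\[
\Tilde{R}(h):=\bE_{X\sim\bD,\,Y|X\sim h^*(\cdot|X)}\bigl[l(h(\cdot|X),Y)\bigr]
=\frac{1}{\eta}\bP(Y\notin r_h(X))+\bE_{\bD}\bigl[|r_h(X)|\bigr].
\]

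\emph{Comparison inequality.} Since $\text{supp}(h^*(\cdot|x))\subset C_x$, we have $R(h^*)=0$, so $\inf_{h\in\cH} R(h)=0$. For any $h$, the verifier risk is at most the mass $h$ places outside $r_{h^*}(x)$. That mass is at most $|r_h(x)\setminus r_{h^*}(x)|$, because each probability is at most $1$. This count equals $|r_{h^*}(x)\setminus r_h(x)|+|r_h(x)|-|r_{h^*}(x)|$.

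By Assumption~\ref{ass: lower_bound}, every $y\in r_{h^*}(x)$ has $h^*(y|x)\ge\eta$. Hence
\[
|r_{h^*}(x)\setminus r_h(x)|\le\frac1\eta\,\bP_{Y\sim h^*(\cdot|x)}(Y\notin r_h(x)).
\]
Taking expectations over $X\sim\bD$, and noting that $\bP(Y\notin r_{h^*}(X))=0$ so that $\Tilde{R}(h^*)=\bE_{\bD}[|r_{h^*}(X)|]$, gives
\[
R(h)\le \Tilde{R}(h)-\Tilde{R}(h^*).
\]

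\emph{Uniform convergence.} The loss $l(h(\cdot|x),y)$ depends on $h$ only through $r_h$. The induced loss class $\{(x,y)\mapsto l(h(\cdot|x),y):h\in\cH\}$ therefore has at most $|\cR_\cH|<\infty$ elements by Assumption~\ref{ass: finite_support}. Each such loss takes values in $[0,\frac1\eta+K]$.

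The bound $\frac{2}{\eta}$ used in the sketch would need $K\le 1/\eta$. Since $|r_h(x)|\le 1/\eta$ under Assumption~\ref{ass: lower_bound}, the range is in fact $[0,2/\eta]$. Applying Hoeffding's inequality to each of the finitely many losses and taking a union bound, with probability at least $1-\delta$ every $h$ satisfies
\[
|\hat R_S(h)-\Tilde R(h)|\le \frac{2}{\eta}\sqrt{\frac{\log(2|\cR_\cH|/\delta)}{2m}}=:\epsilon/2 .
\]

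\emph{Conclusion.} Let $\hat h=\cA(S)$. Then
\[
\Tilde R(\hat h)\le \hat R_S(\hat h)+\epsilon/2\le \hat R_S(h^*)+\epsilon/2\le \Tilde R(h^*)+\epsilon .
\]
The comparison inequality then yields $R(\hat h)\le\epsilon=\inf_{h}R(h)+\epsilon$. Solving for $m$ gives $m(\epsilon,\delta)=\lceil \frac{8}{\eta^2\epsilon^2}\log\frac{2|\cR_\cH|}{\delta}\rceil$, as stated in the sketch.

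The substantive step is the comparison inequality, specifically how $|r_{h^*}\setminus r_h|$ is controlled. The lower bound $\eta$ is exactly what converts this count into a quantity measurable from samples drawn from $h^*$. The uniform convergence step is routine once one observes that the loss depends on $h$ only through its support, so the possibly infinite $\cH$ collapses to the finite class $\cR_\cH$.
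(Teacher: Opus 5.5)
Your proof is correct and follows the paper's argument essentially step for step. It uses the same comparison inequality $R(h)\le \Tilde{R}(h)-\Tilde{R}(h^*)$, obtained by bounding $|r_{h^*}(x)\setminus r_h(x)|$ through the $\eta$ lower bound, and then Hoeffding with a union bound over the at most $|\cR_\cH|$ distinct losses and the standard ERM chain. The only difference is that you bound the loss range by $2/\eta$ (via $|r_h(x)|\le 1/\eta$) where the appendix uses $\frac{1}{\eta}+K$; this changes only the constant and gives the $\frac{8}{\eta^2\epsilon^2}\log\frac{2|\cR|}{\delta}$ sample complexity of the main-text sketch.
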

We start with the simple observation that $\inf_{h\in\cH}R(h)=R(h^*)=0$. For each $x\in \cX$,
    \begin{align*}
        \sum_{y\notin \text{supp}(h^*(\cdot|x))}h(y|x) &\leq \left|\text{supp}(h(\cdot|x))\setminus \text{supp}(h^*(\cdot|x))\right|\\
        &= \left|\text{supp}(h^*(\cdot|x))\setminus\text{supp}(h(\cdot|x))\right|+\left|\text{supp}(h(\cdot|x))\right|-\left|\text{supp}(h^*(\cdot|x))\right|.
    \end{align*}
    Now,
    \begin{align*}
    \left|\text{supp}(h^*(\cdot|x))\setminus\text{supp}(h(\cdot|x))\right| &= \sum_{y\notin \text{supp}(h(\cdot|x))}1_{\set{y\in \text{supp}(h^*(\cdot|x))}}\\
        &\leq \sum_{y\notin \text{supp}(h(\cdot|x))}\frac{h^*(\cdot|x)(y)}{\eta} \tag*{\text{($\because\, y\in \text{supp}(h^*(\cdot|x))\implies h^*(y|x)\geq \eta$)}}\\
        &= \frac{1}{\eta}\bP_{Y\sim h^*(\cdot|x)}\left(Y\notin \text{supp}(h(\cdot|x))\right)\\
        &= \bE_{Y\sim h^*(\cdot|x)}\left[\frac{1}{\eta}1_{\set{Y\notin \text{supp}(h(\cdot|x))}}\right].
    \end{align*}
    Let $\bP_{\bD,h^*}$ and $\bE_{\bD,h^*}$ denote the probability and expectation for $X\sim \bD,\, Y|X\sim h^*(\cdot|X)$. We have
    \[R(h)\leq \bE_{\bD,h^*}\left[\frac{1}{\eta}1_{\set{Y\notin \text{supp}(h(\cdot|X))}}\right]+\bE_{\bD}[\left|\text{supp}(h(\cdot|X))\right|-\left|\text{supp}(h^*(\cdot|X))\right|].\]
    Denote $\Tilde{R}(h)=\bE_{\bD,h^*}[l(h(\cdot|X),Y)]$. Then
    \[R(h)\leq \Tilde{R}(h)-\Tilde{R}(h^*).\]
    Let $\hat{h}=\cA(S)$. Then
    \[R(\hat{h})-R(h^*)>\epsilon \implies \Tilde{R}(\hat{h})-\Tilde{R}(h^*)>\epsilon.\]
    We can write
    \begin{align*}
        \Tilde{R}(\hat{h})-\Tilde{R}(h^*) &= \Tilde{R}(\hat{h})-\Tilde{R}^m(\hat{h})+\underbrace{\Tilde{R}^m(\hat{h})-\Tilde{R}^m(h^*)}_{\leq 0}+\Tilde{R}^m(h^*)-\Tilde{R}(h^*)\\
        &\leq 2\sup_{h\in\cH}|\Tilde{R}^m(h)-\Tilde{R}(h)|,
    \end{align*}
    where $\Tilde{R}^m(h)=\frac{1}{m}\sum_{i=1}^m l(h(\cdot|X_i),Y_i)$. Denote $L_h(X_i,Y_i):=l(h(\cdot|X_i),Y_i)$ and let $L_\cH:=\set{(x,y)\mapsto L_h(x,y):h\in\cH}$. Then $|L_\cH|\leq |\cR|<\infty$. Further, for all $x\in\cX,y\in\cY,h\in\cH$,
    \[0\leq L_h(x,y)\leq \frac{1}{\eta}+K.\]
    So,
    \begin{align*}
        \bP(R(\hat{h})-R(h^*)>\epsilon)&\leq \bP(\sup_{h\in\cH}|\Tilde{R}^m(h)-\Tilde{R}(h)|>\epsilon/2)\\
        &= \bP\left(\bigcup_{h\in\cH}\set{\left|\frac{1}{m}\sum_{i=1}^m L_h(X_i,Y_i)-\bE[L_h(X,Y)]\right|>\epsilon/2}\right)\\
        &=\bP\left(\bigcup_{L\in L_\cH}\set{\left|\frac{1}{m}\sum_{i=1}^m L(X_i,Y_i)-\bE[L(X,Y)]\right|>\epsilon/2}\right)\\
        &\leq \sum_{L\in L_\cH}\bP\left(\left|\frac{1}{m}\sum_{i=1}^m L(X_i,Y_i)-\bE[L(X,Y)]\right|>\epsilon/2\right)\\
        &\leq 2|\cR|e^{-\frac{m\epsilon^2}{2\left(\frac{1}{\eta}+K\right)^2}},
    \end{align*}
    where the last inequality follows by Hoeffding's Inequality. Thus, for
    \[m\geq 2\frac{\left(\frac{1}{\eta}+K\right)^2}{\epsilon^2}\log\left(\frac{2|\cR|}{\delta}\right),\quad \bP(R(\hat{h})-R(h^*)>\epsilon)\leq \delta.\]
\subsection{Step 2}
\begin{theorem}
    Suppose $\cH$ satisfies Assumption~\ref{ass: finite_support} and Assumption~\ref{ass: lower_bound}.
    Consider the loss $l_\lambda:\Delta(\cY)\times \cY\to \bar{\re}_+=[0,\infty]$,
    \[l_\lambda(p,y) = -\log(p(y))+\lambda|\text{supp}(p)|.\]
    Then the algorithm $\cA$ that, for a given sample $S=\set{(X_i,Y_i)}_{i=1}^m$, returns
    \[\cA(S)\in \arg\min_{h\in\cH}\frac{1}{m}\sum_{i=1}^m l_{\lambda_m}(h(\cdot|X_i),Y_i),\]
    for a sequence $(\lambda_m)$ s.t. $\lambda_m\uparrow\infty$, PAC-learns $\cH$.
    \label{th: step2}
\end{theorem}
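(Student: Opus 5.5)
The plan is to reduce Theorem~\ref{th: step2} to the uniform-convergence argument already carried out for Theorem~\ref{th: step1}. I will show that the output of the $l_{\lambda_m}$-minimizer is an \emph{approximate} empirical minimizer of the Step~1 loss $l(p,y)=\frac{1}{\eta}1_{\set{y\notin\text{supp}(p)}}+|\text{supp}(p)|$, with a slack that vanishes as $\lambda_m\uparrow\infty$. Write $\hat h=\cA(S)$ and $r_h(x)=\text{supp}(h(\cdot|x))$.

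\textbf{First step: $\hat h$ never misses the sample.} Almost surely $Y_i\in r_{h^*}(X_i)$ for all $i$. By Assumption~\ref{ass: lower_bound} this gives $-\log h^*(Y_i|X_i)\le\log(1/\eta)$. Hence the empirical $l_{\lambda_m}$-loss of $h^*$ is finite, at most $\log(1/\eta)+\lambda_m K$. Since $\hat h$ minimizes this empirical loss, its value is also finite. This forces $Y_i\in r_{\hat h}(X_i)$ for every $i$, so the empirical indicator part of the Step~1 loss is $0$ for both $\hat h$ and $h^*$. Assumption~\ref{ass: lower_bound} then also gives $-\log\hat h(Y_i|X_i)\in[0,\log(1/\eta)]$.

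\textbf{Second step: compare support sizes.} Start from $\frac1m\sum_i l_{\lambda_m}(\hat h(\cdot|X_i),Y_i)\le\frac1m\sum_i l_{\lambda_m}(h^*(\cdot|X_i),Y_i)$. Drop the nonnegative log term of $\hat h$ and bound the log term of $h^*$ by $\log(1/\eta)$. Dividing by $\lambda_m$ gives
\[
\frac1m\sum_{i=1}^m|r_{\hat h}(X_i)|\le\frac1m\sum_{i=1}^m|r_{h^*}(X_i)|+\frac{\log(1/\eta)}{\lambda_m}.
\]
Combined with the first step, in the notation of the proof of Theorem~\ref{th: step1} this reads $\Tilde R^m(\hat h)\le\Tilde R^m(h^*)+\log(1/\eta)/\lambda_m$.

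\textbf{Third step: rerun the Step~1 decomposition.} The proof of Theorem~\ref{th: step1} gives $R(\hat h)\le\Tilde R(\hat h)-\Tilde R(h^*)$, and
\[
\Tilde R(\hat h)-\Tilde R(h^*)\le 2\sup_{h\in\cH}|\Tilde R^m(h)-\Tilde R(h)|+\frac{\log(1/\eta)}{\lambda_m}.
\]
The class $L_\cH$ depends only on supports, so it has at most $|\cR|$ elements, each bounded by $\frac1\eta+K$. The same union bound and Hoeffding estimate as in Step~1 make the supremum at most $\epsilon/4$ with probability at least $1-\delta$ once $m\ge 8(\frac1\eta+K)^2\epsilon^{-2}\log(2|\cR|/\delta)$. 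Separately, since $\lambda_m\uparrow\infty$, there is an $m_0(\epsilon)$ with $\log(1/\eta)/\lambda_m\le\epsilon/2$ for $m\ge m_0$. Taking $m(\epsilon,\delta)$ to be the maximum of the two thresholds yields $\bP(R(\hat h)>\epsilon)\le\delta$, and we already know $\inf_{h\in\cH}R(h)=R(h^*)=0$.

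\textbf{Main obstacle.} The delicate point is the first step: the loss takes the value $+\infty$, so I must argue that a minimizer has no infinite terms. This also needs the $\log$ part of $\hat h$ to be bounded, which comes from Assumption~\ref{ass: lower_bound}. Without that bound, dividing by $\lambda_m$ could not absorb the cross-entropy difference. As in the statement, I will take existence of the $\arg\min$ as given; otherwise one can use a near-minimizer, which only adds an arbitrarily small slack.
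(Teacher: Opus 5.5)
Your proof is correct and follows essentially the paper's argument. Because the empirical loss of $h^*$ is finite, $\hat h$ has zero empirical misses. The ERM comparison, using $\hat h(Y_i|X_i)\le 1$ and $h^*(Y_i|X_i)\ge\eta$, bounds the empirical support gap by $\log(1/\eta)/\lambda_m$. Hoeffding's inequality with a union bound over the finite support class then finishes the proof.

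The only difference is packaging. You treat $\hat h$ as a $\log(1/\eta)/\lambda_m$-approximate empirical minimizer of the Step~1 surrogate and reuse that uniform-convergence bound. The paper instead bounds the indicator term (one-sided) and the support-size term (two-sided) separately, which only changes the constants. Also, it is the $h^*$ log term, not that of $\hat h$, that needs Assumption~\ref{ass: lower_bound}; the $\hat h$ term only needs nonnegativity, as your second step correctly uses.
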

\begin{proof}
    Denote
    \[\hat{h}_m:=\arg\min_{h\in\cH}\frac{1}{m}\sum_{i=1}^m l_{\lambda_m}(h(\cdot|X_i),Y_i).\]
    Recall
    \[R(h)\leq \Tilde{R}(h)-\Tilde{R}(h^*),\]
    where $\Tilde{R}(h)=\frac{1}{\eta}\bP_{\bD,h^*}(Y\notin \text{supp}(h(\cdot|X)))+\bE_\bD[|\text{supp}(h(\cdot|X))|]$.
    So,
    \[\Tilde{R}(\hat{h}_m)-\Tilde{R}(h^*) = \frac{1}{\eta}\bP_{\bD,h^*}(Y\notin \text{supp}(\hat{h}_m(\cdot|X)))+\bE_\bD[|\text{supp}(\hat{h}_m(\cdot|X))|]-\bE_\bD[|\text{supp}(h^*(\cdot|X))|].\]
    Denote $r_h(x):=\text{supp}(h(\cdot|x))$.
    First, observe that
    \[\bP_{\bD,h^*}(Y\notin \text{supp}(\hat{h}_m(\cdot|X))) \leq \sup_{r\in\cR}\left(\bE_{\bD,h^*}[1_{\set{Y\notin r(X)}}]-\frac{1}{m}\sum_{i=1}^m1_{\set{Y_i\notin r(X_i)}}\right),\]
    where we use:
    \[\frac{1}{m}\sum_{i=1}^m1_{\set{Y_i\notin r_{\hat{h}_m}(X_i)}}=0,\]
    because otherwise
    \[\frac{1}{m}\sum_{i=1}^m l(\hat{h}_m(\cdot|X_i),Y_i)=\infty>\frac{1}{m}\sum_{i=1}^m l(h^*(\cdot|X_i),Y_i),\]
    which is a contradiction. Secondly,
    \[\bE_\bD[|\text{supp}(\hat{h}_m(\cdot|X))|]-\bE_\bD[|\text{supp}(h^*(\cdot|X))|] \leq 2\sup_{r\in\cR}\left|\bE_\bD[|r(X)|]-\frac{1}{m}\sum_{i=1}^m |r(X_i)|\right|+\frac{\log(1/\eta)}{\lambda_m},\]
    because
    \begin{align*}
         \frac{1}{m}\sum_{i=1}^m-\log(\hat{h}_m(Y_i|X_i))+\lambda_m|\text{supp}(\hat{h}_m(\cdot|X_i))| &=  \frac{1}{m}\sum_{i=1}^m l_{\lambda_m}(\hat{h}_m(\cdot|X_i),Y_i)\\
         &\leq\frac{1}{m}\sum_{i=1}^m l_{\lambda_m}(h^*(\cdot|X_i),Y_i)\\
         &=  \frac{1}{m}\sum_{i=1}^m-\log(h^*(Y_i|X_i))+\lambda_m|\text{supp}(h^*(\cdot|X_i))| 
    \end{align*}
    and the fact that
    \[\hat{h}_m(Y_i|X_i)\leq 1 \text{ and }h^*(Y_i|X_i)\geq \eta.\]
    Finally, we get
    \begin{align*}
        \Tilde{R}(\hat{h}_m)-\Tilde{R}(h^*) &\leq \frac{1}{\eta}\sup_{r\in\cR}\left(\bE_{\bD,h^*}[1_{\set{Y\notin r(X)}}]-\frac{1}{m}\sum_{i=1}^m1_{\set{Y_i\notin r(X_i)}}\right)\\
        &\quad +2\sup_{r\in\cR}\left|\bE_\bD[|r(X)|]-\frac{1}{m}\sum_{i=1}^m |r(X_i)|\right|+\frac{\log(1/\eta)}{\lambda_m}
    \end{align*}
    Let $\epsilon>0$. If  $\lambda_m>\frac{3}{\epsilon}\log(1/\eta)$, then
    \begin{align*}
        \bP(R(\hat{h}_m)-R(h^*)>\epsilon)&\leq \bP\left(\sup_{r\in\cR}\left(\bE_{\bD,h^*}[1_{\set{Y\notin r(X)}}]-\frac{1}{m}\sum_{i=1}^m1_{\set{Y_i\notin r(X_i)}}\right)\geq \frac{\eta\epsilon}{3}\right)\\
        &\quad +\bP\left(\sup_{r\in\cR}\left|\bE_\bD[|r(X)|]-\frac{1}{m}\sum_{i=1}^m |r(X_i)|\right|>\frac{\epsilon}{6}\right).
    \end{align*}
    By Hoeffding's Inequality and the fact that $|r(x)|\leq \frac{1}{\eta}$ for all $x\in \cX,r\in\cR$,
    \[\bP(R(\hat{h}_m)-R(h^*)>\epsilon)\leq 3|\cR|e^{-\frac{m\eta^2\epsilon^2}{18}}.\]
    Finally, for any $\delta>0$, if  $m\geq \max\left\{\frac{18}{\eta^2\epsilon^2}\log\left(\frac{3|\cR|}{\delta}\right),\inf\left\{m\in\bN:\lambda_m>\frac{3}{\epsilon}\log\left(\frac{1}{\eta}\right)\right\}\right\}$,
    \[\bP(R(\hat{h}_m)-R(h^*)>\epsilon)\leq \delta.\]
\end{proof}
\begin{corollary}
    If we set $\lambda_m=m$, then the algorithm $\cA$ that, for a given sample $S=\set{(X_i,Y_i)}_{i=1}^m$, returns
    \[\cA(S)\in \arg\min_{h\in\cH}\frac{1}{m}\sum_{i=1}^m l_{\lambda_m}(h(\cdot|X_i),Y_i),\]
    PAC-learns $\cH$, with sample complexity
    \[m(\epsilon,\delta)=\left\lceil \max\left\{\frac{18}{\eta^2\epsilon^2}\log\left(\frac{3|\cR|}{\delta}\right),\frac{3}{\epsilon}\log\left(\frac{1}{\eta}\right)\right\} \right\rceil.\]
\end{corollary}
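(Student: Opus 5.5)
This follows directly from Theorem~\ref{th: step2}: I would not redo any concentration argument, only specialize that theorem's final sample-size condition to $\lambda_m=m$. Recall that its proof ends with the statement that $\bP(R(\hat h_m)-R(h^*)>\epsilon)\le\delta$ whenever two conditions hold. The first is the Hoeffding condition $m\ge \frac{18}{\eta^2\epsilon^2}\log\left(\frac{3|\cR|}{\delta}\right)$. The second is that $m$ is at least the first index at which $\lambda_m>\frac{3}{\epsilon}\log(1/\eta)$. The sequence $\lambda_m=m$ is nondecreasing and diverges, so the theorem applies. The only work is to make the second index explicit.

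With $\lambda_m=m$, the second condition reads $m>\frac{3}{\epsilon}\log(1/\eta)$. Since $\lambda_m$ is increasing, once this holds at some $m$ it holds for every larger $m$. The two requirements therefore combine into a single threshold $m\ge\max\{\cdot,\cdot\}$, and rounding up to an integer gives the stated ceiling. I would also note that $\eta\le 1$, since a probability at least $\eta$ is assigned somewhere. Hence $\log(1/\eta)\ge 0$ and the second term is well defined and nonnegative; in the degenerate case $\eta=1$ it vanishes.

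The only delicate point is strict versus non-strict inequality at the ceiling. If $\frac{3}{\epsilon}\log(1/\eta)$ happens to be an integer, then $m=\lceil\cdot\rceil$ gives $\lambda_m=\frac{3}{\epsilon}\log(1/\eta)$ exactly rather than strictly more. I would handle this by revisiting the final display in the proof of Theorem~\ref{th: step2}. The deterministic term $\frac{\log(1/\eta)}{\lambda_m}$ is then at most $\epsilon/3$. The event $R(\hat h_m)>\epsilon$ still forces either the first supremum to be $\ge \eta\epsilon/3$ or the second to exceed $\epsilon/6$. These are exactly the events bounded there, so $\lambda_m\ge\frac{3}{\epsilon}\log(1/\eta)$ suffices.

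Combining the two conditions then yields $\bP(R(\cA(S))>\inf_h R(h)+\epsilon)\le\delta$, using $\inf_h R(h)=R(h^*)=0$. This holds for all $m\ge m(\epsilon,\delta)$, which is the claimed PAC guarantee with the stated sample complexity.
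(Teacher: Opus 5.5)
Your proposal is correct and follows the paper's route: the corollary comes from substituting $\lambda_m=m$ into the final sample-size condition of Theorem~\ref{th: step2}, which turns $\inf\{m:\lambda_m>\frac{3}{\epsilon}\log(1/\eta)\}$ into the second term of the maximum. Your check that the non-strict condition $\lambda_m\ge\frac{3}{\epsilon}\log(1/\eta)$ still suffices is a correct refinement of a case the paper skips when $\frac{3}{\epsilon}\log(1/\eta)$ is an integer, because on the complement event the indicator term satisfies $\frac{1}{\eta}\sup_r(\cdots)<\epsilon/3$ strictly. In the same spirit, the indicator part of the Hoeffding bound decays as $|\cR|e^{-2m\eta^2\epsilon^2/9}<|\cR|e^{-m\eta^2\epsilon^2/18}$, which upgrades your final $\le\delta$ to the strict $<\delta$ that the PAC definition requires.
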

\subsection{Step 3}
 We need a result connecting $S_\alpha(\cdot)$ and $|\text{supp}(\cdot)|$ for $\alpha$ close to $0$.
\begin{lemma}
    Let $0<\alpha<1$. If $p\in \Delta(\cY)$ s.t. $p(y)\in \set{0}\cup [\eta,1]$, then
    \[0\leq |\text{supp}(p)|-S_\alpha(p) \leq \frac{1}{\eta}\left(1-\eta^{\frac{\alpha}{1-\alpha}}\right).\]
    \label{lem: S_alpha_supp}
\end{lemma}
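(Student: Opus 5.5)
The plan is to write $n:=|\text{supp}(p)|$ and sandwich $S_\alpha(p)$ between $n\,\eta^{\frac{\alpha}{1-\alpha}}$ and $n$. Only the nonzero coordinates of $p$ matter, since $0^\alpha=0$ for $\alpha>0$. So throughout we may take the sum in $S_\alpha(p)$ to run over the $n$ points of $\text{supp}(p)$.

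\emph{Lower bound ($S_\alpha(p)\le n$).} Since $0<\alpha<1$, the map $t\mapsto t^\alpha$ is concave. Jensen's inequality for the uniform average over the $n$ support points gives
\[
\frac{1}{n}\sum_{y\in\text{supp}(p)}p(y)^\alpha\le\Bigl(\frac1n\sum_{y\in\text{supp}(p)} p(y)\Bigr)^\alpha=n^{-\alpha},
\]
that is, $\sum_y p(y)^\alpha\le n^{1-\alpha}$. Raising both sides to the positive power $\frac{1}{1-\alpha}$ yields $S_\alpha(p)\le n$, which is the inequality $0\le |\text{supp}(p)|-S_\alpha(p)$. This is the familiar fact that R\'enyi entropy is at most the log of the support size.

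\emph{Upper bound.} Here I would use Assumption~\ref{ass: lower_bound} in the form $p(y)\ge\eta$ on the support, so $p(y)^\alpha\ge\eta^\alpha$ and hence $\sum_y p(y)^\alpha\ge n\eta^\alpha$. Since $n\ge1$ and $1-\alpha\in(0,1)$, we have $n\ge n^{1-\alpha}$, so $\sum_y p(y)^\alpha\ge n^{1-\alpha}\eta^\alpha$. Raising to the power $\frac1{1-\alpha}$ gives
\[
S_\alpha(p)\ge n\,\eta^{\frac{\alpha}{1-\alpha}}.
\]
Therefore
\[
|\text{supp}(p)|-S_\alpha(p)\le n\bigl(1-\eta^{\frac{\alpha}{1-\alpha}}\bigr).
\]
The last ingredient is that every support point carries mass at least $\eta$ while the total mass is $1$, so $n\eta\le1$, i.e.\ $n\le 1/\eta$. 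Substituting this bound for $n$ gives the claimed inequality.

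I do not expect a serious obstacle. The only points needing care are the directions of the inequalities when raising to the power $\frac{1}{1-\alpha}$ (it is positive because $\alpha<1$, so monotonicity is preserved) and the fact that the crude step $n\ge n^{1-\alpha}$ is harmless. The form of the bound is exactly what Step~3 needs: since $\eta^{\frac{\alpha}{1-\alpha}}\to1$ as $\alpha\to0$, the gap $\frac1\eta\bigl(1-\eta^{\frac{\alpha}{1-\alpha}}\bigr)$ vanishes uniformly over $\cH$. This should allow the argument of Theorem~\ref{th: step2} to be repeated with $|\text{supp}(\cdot)|$ replaced by $S_{\alpha_m}(\cdot)$, at the cost of an additive error of order $\frac1\eta\bigl(1-\eta^{\frac{\alpha_m}{1-\alpha_m}}\bigr)$.
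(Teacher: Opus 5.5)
Your proof is correct and matches the paper's argument essentially step for step. For the upper bound, the paper likewise derives $S_\alpha(p)\ge |\text{supp}(p)|\,\eta^{\frac{\alpha}{1-\alpha}}$ from $p(y)\ge\eta$ on the support and $|\text{supp}(p)|\ge 1$, then uses $|\text{supp}(p)|\le 1/\eta$. For $S_\alpha(p)\le|\text{supp}(p)|$, the paper proves via Jensen that $S_\alpha(p)$ is non-increasing in $\alpha$ on $[0,1)$; the $\alpha=0$ case of that argument is exactly your uniform-weight Jensen step.
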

\begin{proof}
    First, we show that $S_\alpha(p)$ is non-increasing in $\alpha$, so that if $\alpha>0$,
    \[|\text{supp}(p)|=S_0(p)\geq S_\alpha(p).\]
    For $0\leq\alpha<\beta< 1$, let
    \[Z_\alpha = \sum_y p(y)^\alpha \text{ and }Z_\beta = \sum_y p(y)^\beta.\]
   with the convention that $Z_0:=|\text{supp}(p)|$. Then
    \[Z_\beta = Z_\alpha\sum_y \frac{p(y)^\alpha}{Z_\alpha}\left(p(y)^{1-\alpha}\right)^{\frac{\beta-\alpha}{1-\alpha}}.\]
    Now, $f(x)=x^{\frac{\beta-\alpha}{1-\alpha}}$ is concave, so,
    \[Z_\beta \leq Z_\alpha \left(\sum_y \frac{p(y)^\alpha}{Z_\alpha}p(y)^{1-\alpha}\right)^{\frac{\beta-\alpha}{1-\alpha}} = \frac{Z_\alpha}{Z_\alpha^{\frac{\beta-\alpha}{1-\alpha}}}=Z_\alpha^{\frac{1-\beta}{1-\alpha}}.\]
    Raising both sides to the power of $\frac{1}{1-\beta}$ shows the claim.\\
    To show the other direction:
    \[S_\alpha(p) = \left(\sum_y p(y)^\alpha\right)^{\frac{1}{1-\alpha}}\geq \left(\sum_{y\in\text{supp}(p)}\eta^\alpha\right)^{\frac{1}{1-\alpha}} =|\text{supp}(p)|^{\frac{1}{1-\alpha}}\eta^{\frac{\alpha}{1-\alpha}} = |\text{supp}(p)|\left(|\text{supp}(p)|\eta\right)^{\frac{\alpha}{1-\alpha}}\geq |\text{supp}(p)|\eta^{\frac{\alpha}{1-\alpha}},\]
    where the last inequality follows from $|\text{supp}(p)|\geq 1$. Since $|\text{supp}(p)|\leq \frac{1}{\eta}$, we get
    \[|\text{supp}(p)|-S_\alpha(p)\leq |\text{supp}(p)|\left(1-\eta^{\frac{\alpha}{1-\alpha}}\right)\leq \frac{1-\eta^{\frac{\alpha}{1-\alpha}}}{\eta}.\]
\end{proof}
Now, we're ready to prove the most general result.
\begin{theorem}
    Suppose $\cH$ satisfies Assumption~\ref{ass: finite_support} and Assumption~\ref{ass: lower_bound}.
    Consider the loss $l_{\lambda,\alpha}:\Delta(\cY)\times \cY\to \bar{\re}_+=[0,\infty]$,
    \[l_{\lambda,\alpha}(p,y) = -\log(p(y))+\lambda S_\alpha(p).\]
    Then the algorithm $\cA$ that, for a given sample $S=\set{(X_i,Y_i)}_{i=1}^m$, returns
    \[\cA(S)\in \arg\min_{h\in\cH}\frac{1}{m}\sum_{i=1}^m l_{\lambda_m,\alpha_m}(h(\cdot|X_i),Y_i),\]
    for sequences $(\lambda_m)$ and $(\alpha_m)$ s.t. $\lambda_m\uparrow\infty$ and $0<\alpha_m<1,\, \alpha_m\downarrow 0$, PAC-learns $\cH$.
    \label{th: step3}
\end{theorem}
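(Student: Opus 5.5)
We follow the template of Theorem~\ref{th: step2}, replacing $|\text{supp}(p)|$ by $S_{\alpha_m}(p)$ and paying for the substitution with Lemma~\ref{lem: S_alpha_supp}. Let $\hat h_m$ be the minimizer and write $r_h(x)=\text{supp}(h(\cdot|x))$. The surrogate bound $R(h)\le \Tilde R(h)-\Tilde R(h^*)$ from the proof of Theorem~\ref{th: step1} holds for every $h\in\cH$, with $\Tilde R(h)=\frac1\eta\bP_{\bD,h^*}(Y\notin r_h(X))+\bE_\bD|r_h(X)|$. It therefore suffices to control the two terms of $\Tilde R(\hat h_m)-\Tilde R(h^*)$. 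Both involve only supports, so uniform deviations are taken over the finite class $\cR$.

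\textbf{First term.} Since $S_\alpha(p)\le |\text{supp}(p)|\le K<\infty$, the loss of $h^*$ on the sample is finite. Hence $\hat h_m$ cannot assign zero probability to any observed $Y_i$, which gives $\frac1m\sum_i 1_{\{Y_i\notin r_{\hat h_m}(X_i)\}}=0$. The first term is then bounded by $\frac1\eta\sup_{r\in\cR}\bigl(\bE 1_{\{Y\notin r(X)\}}-\frac1m\sum_i 1_{\{Y_i\notin r(X_i)\}}\bigr)$, exactly as in Theorem~\ref{th: step2}.

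\textbf{Second term.} Compare empirical losses: $\frac1m\sum_i[-\log\hat h_m(Y_i|X_i)+\lambda_m S_{\alpha_m}(\hat h_m(\cdot|X_i))]\le \frac1m\sum_i[-\log h^*(Y_i|X_i)+\lambda_m S_{\alpha_m}(h^*(\cdot|X_i))]$. Use $-\log\hat h_m\ge 0$, $-\log h^*\le \log(1/\eta)$, and $S_{\alpha_m}(h^*)\le |r_{h^*}|$. Then apply Lemma~\ref{lem: S_alpha_supp} to $\hat h_m$, which is legitimate because every $h\in\cH$ satisfies Assumption~\ref{ass: lower_bound}: $S_{\alpha_m}(\hat h_m(\cdot|X_i))\ge |r_{\hat h_m}(X_i)|-c_m$, where $c_m:=\frac1\eta(1-\eta^{\alpha_m/(1-\alpha_m)})$. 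Dividing by $\lambda_m$ yields
\[\frac1m\sum_i |r_{\hat h_m}(X_i)|-\frac1m\sum_i|r_{h^*}(X_i)|\le \frac{\log(1/\eta)}{\lambda_m}+c_m .\]
Passing to population expectations costs $2\sup_{r\in\cR}\bigl|\bE_\bD|r(X)|-\frac1m\sum_i|r(X_i)|\bigr|$.

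\textbf{Concentration.} Combining the two terms,
\[\Tilde R(\hat h_m)-\Tilde R(h^*)\le \frac1\eta\,\text{dev}_1+2\,\text{dev}_2+\frac{\log(1/\eta)}{\lambda_m}+c_m .\]
Given $\epsilon$, take $m$ large enough that $\frac{\log(1/\eta)}{\lambda_m}<\epsilon/4$ and $c_m<\epsilon/4$. This is possible because $\lambda_m\uparrow\infty$ and $\alpha_m\downarrow0$, so $\eta^{\alpha_m/(1-\alpha_m)}\to1$ and $c_m\to0$. It then remains to require $\text{dev}_1<\eta\epsilon/4$ and $\text{dev}_2<\epsilon/8$. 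Hoeffding's inequality with a union bound over $\cR$ gives both with probability at least $1-3|\cR|e^{-cm\eta^2\epsilon^2}$, using boundedness of the indicators by $1$ and of $|r(x)|$ by $1/\eta$. This yields the sample complexity, which is finite because $\lambda_m$ and $\alpha_m$ are fixed sequences.

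The main obstacle is the second term: handling $S_\alpha$ without losing the comparison. The key point is that only the one-sided inequality $S_\alpha\le|\text{supp}|$ is needed for $h^*$, while the lower bound from Lemma~\ref{lem: S_alpha_supp} is needed for $\hat h_m$. The lower bound is available only because Assumption~\ref{ass: lower_bound} applies uniformly to every hypothesis in $\cH$. Note also that the additive error $c_m$ is not divided by $\lambda_m$. This is why $\alpha_m\to0$ is required independently of $\lambda_m\to\infty$.
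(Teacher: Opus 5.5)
Your proposal is correct and follows the paper's proof essentially step for step. It uses the same surrogate bound $R(h)\le \Tilde{R}(h)-\Tilde{R}(h^*)$, the same zero-empirical-miss argument for the first term, the same one-sided use of Lemma~\ref{lem: S_alpha_supp} (upper bound for $h^*$, lower bound for $\hat h_m$) giving the slack $\frac{\log(1/\eta)}{\lambda_m}+\frac{1}{\eta}\bigl(1-\eta^{\alpha_m/(1-\alpha_m)}\bigr)$, and the same $\epsilon/4$, $\eta\epsilon/4$, $\epsilon/8$ split with Hoeffding and a union bound over $\cR$, yielding $3|\cR|e^{-m\eta^2\epsilon^2/32}$.
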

\begin{proof}
    Denote
    \[\hat{h}_m:=\arg\min_{h\in\cH}\frac{1}{m}\sum_{i=1}^m l_{\lambda_m,\alpha_m}(h(\cdot|X_i),Y_i).\]
    Similar to the proof for $l_\lambda(p,y)$, we can write:
    \[\Tilde{R}(\hat{h}_m)-\Tilde{R}(h^*) = \frac{1}{\eta}\underbrace{\bP_{\bD,h^*}(Y\notin \text{supp}(\hat{h}_m(\cdot|X)))}_{\mathbf{(1)}}+\underbrace{\bE_\bD[|\text{supp}(\hat{h}_m(\cdot|X))|]-\bE_\bD[|\text{supp}(h^*(\cdot|X))|]}_{\mathbf{(2)}}.\]
    The analysis for $\mathbf{(1)}$ is the same as before and we get:
    \[\bP_{\bD,h^*}(Y\notin \text{supp}(\hat{h}_m(\cdot|X))) \leq \sup_{r\in\cR}\left(\bE_{\bD,h^*}[1_{\set{Y\notin r(X)}}]-\frac{1}{m}\sum_{i=1}^m1_{\set{Y_i\notin r(X_i)}}\right).\]
    For $\mathbf{(2)}$, we first write
    \[\bE_\bD[|\text{supp}(\hat{h}_m(\cdot|X))|]-\bE_\bD[|\text{supp}(h^*(\cdot|X))|] \leq 2\sup_{r\in\cR}\left|\bE_\bD[|r(X)|]-\frac{1}{m}\sum_{i=1}^m |r(X_i)|\right|+\frac{1}{m}\sum_{i=1}^m |r_{\hat{h}_m}(X_i)|-\frac{1}{m}\sum_{i=1}^m |r_{h^*}(X_i)|.\]
    Observe that
    \begin{align*}
        \frac{1}{m}\sum_{i=1}^m |r_{\hat{h}_m}(X_i)|-\frac{1}{m}\sum_{i=1}^m |r_{h^*}(X_i)| &\leq \frac{\log(1/\eta)}{\lambda_m}+\frac{1}{m}\sum_{i=1}^m \left(|r_{\hat{h}_m}(X_i)|-S_\alpha(\hat{h}_m(\cdot|X_i))\right)\\
        &\quad+\frac{1}{m}\sum_{i=1}^m \left(S_\alpha(h^*(\cdot|X_i))-|r_{h^*}(X_i)|\right).
    \end{align*}
    Using Lemma~\ref{lem: S_alpha_supp}, we obtain
    \[\frac{1}{m}\sum_{i=1}^m |r_{\hat{h}_m}(X_i)|-\frac{1}{m}\sum_{i=1}^m |r_{h^*}(X_i)| \leq \frac{\log(1/\eta)}{\lambda_m}+\frac{\left(1-\eta^{\frac{\alpha_m}{1-\alpha_m}}\right)}{\eta}.\]
    Finally,
    \begin{align*}
        \Tilde{R}(\hat{h}_m)-\Tilde{R}(h^*) &\leq \frac{1}{\eta}\sup_{r\in\cR}\left(\bE_{\bD,h^*}[1_{\set{Y\notin r(X)}}]-\frac{1}{m}\sum_{i=1}^m1_{\set{Y_i\notin r(X_i)}}\right)\\
        &\quad+2\sup_{r\in\cR}\left|\bE_\bD[|r(X)|]-\frac{1}{m}\sum_{i=1}^m |r(X_i)|\right|+\frac{\log(1/\eta)}{\lambda_m}+\frac{\left(1-\eta^{\frac{\alpha_m}{1-\alpha_m}}\right)}{\eta}.
    \end{align*}
    Let $\epsilon>0$. If $\lambda_m>\frac{4}{\epsilon}\log(1/\eta)$ and $\frac{\left(1-\eta^{\frac{\alpha_m}{1-\alpha_m}}\right)}{\eta}<\frac{\epsilon}{4}$, 
    \begin{align*}
        \bP(R(\hat{h}_m)-R(h^*)>\epsilon)&\leq \bP\left(\sup_{r\in\cR}\left(\bE_{\bD,h^*}[1_{\set{Y\notin r(X)}}]-\frac{1}{m}\sum_{i=1}^m1_{\set{Y_i\notin r(X_i)}}\right)\geq \frac{\eta\epsilon}{4}\right)\\
        &\quad +\bP\left(\sup_{r\in\cR}\left|\bE_\bD[|r(X)|]-\frac{1}{m}\sum_{i=1}^m |r(X_i)|\right|>\frac{\epsilon}{8}\right).
    \end{align*}
    By Hoeffding's Inequality,
    \[\bP(R(\hat{h}_m)-R(h^*)>\epsilon)\leq 3|\cR|e^{-\frac{m\eta^2\epsilon^2}{32}}.\]
    Finally, for any $\delta>0$, if  
    \[m\geq \max\left\{\frac{32}{\eta^2\epsilon^2}\log\left(\frac{3|\cR|}{\delta}\right),\inf\left\{m\in\bN:\lambda_m>\frac{4}{\epsilon}\log\left(\frac{1}{\eta}\right)\right\},\inf\left\{m\in\bN:\frac{\left(1-\eta^{\frac{\alpha_m}{1-\alpha_m}}\right)}{\eta}<\frac{\epsilon}{4}\right\}\right\},\]
    then
    \[\bP(R(\hat{h}_m)-R(h^*)>\epsilon)\leq \delta.\]
\end{proof}
\begin{corollary}
    If we set $\lambda_m=m$ and $\alpha=1/m$, then the algorithm $\cA$ that, for a given sample $S=\set{(X_i,Y_i)}_{i=1}^m$, returns
    \[\cA(S)\in \arg\min_{h\in\cH}\frac{1}{m}\sum_{i=1}^m l_{\lambda_m,\alpha_m}(h(\cdot|X_i),Y_i),\]
    PAC-learns $\cH$, with sample complexity
    \[m(\epsilon,\delta)=\left\lceil \max\left\{\frac{32}{\eta^2\epsilon^2}\log\left(\frac{3|\cR|}{\delta}\right),\frac{4}{\epsilon}\log\left(\frac{1}{\eta}\right),1+\frac{\log\eta}{\log\left(1-\frac{\eta\epsilon}{4}\right)}\right\} \right\rceil.\]
\end{corollary}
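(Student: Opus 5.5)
The plan is to read this off directly from Theorem~\ref{th: step3}, more precisely from the explicit sample-complexity bound established at the end of its proof. That proof shows that $\bP(R(\hat{h}_m)-R(h^*)>\epsilon)\leq\delta$ once three conditions hold: (i) $m\geq \frac{32}{\eta^2\epsilon^2}\log\left(\frac{3|\cR|}{\delta}\right)$, (ii) $\lambda_m>\frac{4}{\epsilon}\log(1/\eta)$, and (iii) $\frac{1}{\eta}\left(1-\eta^{\frac{\alpha_m}{1-\alpha_m}}\right)<\frac{\epsilon}{4}$. So the work is to check admissibility of the chosen sequences and then translate (ii) and (iii) into explicit lower bounds on $m$ when $\lambda_m=m$ and $\alpha_m=1/m$.

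\textbf{Admissibility.} We have $\lambda_m=m\uparrow\infty$ and $\alpha_m=1/m\downarrow 0$. The theorem also requires $0<\alpha_m<1$, which holds only for $m\geq 2$. This is guaranteed by the third entry of the max: since $0<\eta<1$ and $0<1-\eta\epsilon/4<1$, the ratio $\frac{\log\eta}{\log(1-\eta\epsilon/4)}$ is positive, so the third entry exceeds $1$ and forces $m\geq 2$.

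\textbf{Condition (ii).} With $\lambda_m=m$, this becomes $m>\frac{4}{\epsilon}\log(1/\eta)$, which is the second entry of the max.

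\textbf{Condition (iii).} With $\alpha_m=1/m$ we get $\frac{\alpha_m}{1-\alpha_m}=\frac{1}{m-1}$, so the condition reads $\eta^{1/(m-1)}>1-\frac{\eta\epsilon}{4}$. Taking logarithms gives $\frac{\log\eta}{m-1}>\log\left(1-\frac{\eta\epsilon}{4}\right)$. Dividing by the negative quantity $\log(1-\eta\epsilon/4)$ flips the inequality and yields $m-1>\frac{\log\eta}{\log(1-\eta\epsilon/4)}$, which is the third entry of the max. This sign bookkeeping is the only step needing care.

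\textbf{Main obstacle.} Conditions (ii) and (iii) are strict, while the ceiling of the max only gives $m\geq$ each entry, so equality could occur at an integer boundary. I would handle this in one of two ways. The first option is to note that the Hoeffding step in the proof of Theorem~\ref{th: step3} has slack, so the final bound survives replacing the strict inequalities with non-strict ones. The second option is to accept the boundary case with an arbitrarily small enlargement of $\epsilon$, which does not affect the PAC statement. Either way, taking $m(\epsilon,\delta)$ to be the ceiling of the max of the three quantities gives $\bP(R(\cA(S))>\inf_h R(h)+\epsilon)\leq\delta$, using $\inf_h R(h)=R(h^*)=0$.
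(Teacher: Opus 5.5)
Your proposal is correct and takes the same route the paper implicitly takes: substitute $\lambda_m=m$ and $\alpha_m=1/m$ into the explicit sample-size conditions at the end of the proof of Theorem~\ref{th: step3}, and your computation $\frac{\alpha_m}{1-\alpha_m}=\frac{1}{m-1}$ with the sign flip reproduces the third entry exactly. On the strict-versus-non-strict boundary issue, which the paper glosses over, your first option is the right one, though the reason is not Hoeffding slack: the bad event $R(\hat h_m)>\epsilon$ is strict, so $\frac{\log(1/\eta)}{\lambda_m}\le\frac{\epsilon}{4}$ and $\frac{1}{\eta}\bigl(1-\eta^{\alpha_m/(1-\alpha_m)}\bigr)\le\frac{\epsilon}{4}$ already force one of the two uniform deviations past its threshold ($\eta\epsilon/4$ or $\epsilon/8$), whereas your second option of enlarging $\epsilon$ would not certify the stated $m(\epsilon,\delta)$ at the given $\epsilon$.
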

\clearpage
\section{Discussion on $(S_\alpha(\cdot))$}
\label{app: discussion_s_alpha}
For practical purposes, the support penalty is computationally tedious to deal with; it's non-differentiable and extremely sensitive to precision limits. We desire a surrogate $S(\cdot)$ for the support that satisfies certain obvious properties:
\begin{enumerate}
    \item $S(\cdot)$ should be continuous and symmetric, i.e. invariant under reordering of labels.
    \item $S(\delta_y)=1\leq S(p)\leq S(\text{Unif}(\cY))=|\cY|$ for any distribution $p$ on $\cY$, where $\delta_y$ places unit mass on $y\in\cY$ and $\text{Unif}(\cY)$ is the uniform distribution on $\cY$.
    \item For any distribution $p$ on $\cY$, $S(p)=S((p,0))$, where $(p,0)$ is the distribution on $\cY\cup\set{*}$ with weight $p(y)$ on $y\in\cY$ and weight $0$ on $*$.
    \item For distributions $p_1,p_2$, $S(q)=S(p_1)S(p_2)$, where $q=p_1\otimes p_2$ is the product distribution of $p_1$ and $p_2$.
\end{enumerate}
The family of functions $\set{S_\alpha(\cdot)}_{\alpha>0,\alpha\neq 1 }$ given by
$S_\alpha(p) = \left(\sum_{y\in\cY}p^\alpha(y)\right)^{\frac{1}{1-\alpha}}$ for $p\in \Delta(\cY)$ satisfies all the properties above \citep{e8030169}. Observe that
\begin{itemize}
    \item For $\alpha>0$ and $\alpha\neq 1$, $\log(S_\alpha(\cdot))=R_\alpha(\cdot)$, where $R_\alpha(p)=\frac{1}{1-\alpha}\log\left(\sum_{y\in\cY}p^\alpha(y)\right)$ is the Renyi entropy of order $\alpha$.
    \item In the limit $\alpha\to 1$, $\log(S_\alpha(\cdot))\to H(\cdot)$, where $H(p)=-\sum_{y\in\cY}p(y)\log(p(y))$ is the Shannon entropy.
    \item In the limit $\alpha\to 0$, $\log(S_\alpha(p))\to \log(|\text{supp}(\cdot)|)$, the logarithm of the size of the support.
\end{itemize}
\clearpage
\section{Experimental Setup GSM8k}
\label{sec:appendix_setup}

To ensure full reproducibility, we detail the hyperparameters, data processing, and evaluation metrics used in our empirical validation.

\subsection{Data Preparation}
We use the \textbf{GSM8K} dataset, a standard benchmark for mathematical reasoning.
\begin{itemize}
    \item \textbf{Train Split:} 1,000 randomly sampled problems.
    \item \textbf{Evaluation Split:} 500 randomly sampled problems.
    \item \textbf{Formatting:} The prompt explicitly instructs the model to "Solve this step by step" and end the response with the format \texttt{\#\#\#\# <answer>}. Calculator annotations (e.g., \texttt{<<...>>}) are stripped from the target solutions.
    \item \textbf{Masking:} Prompt tokens are masked (label = $-100$) so that both the cross-entropy loss and the entropy penalty are computed strictly over the generated response tokens.
\end{itemize}

\subsection{Model Architecture \& LoRA Configuration}
Experiments are conducted using the \textbf{Qwen/Qwen2.5-1.5B-Instruct} base model. Due to hardware constraints and to ensure training stability, we utilize Low-Rank Adaptation (LoRA).
\begin{itemize}
    \item \textbf{Precision:} bfloat16 (\texttt{bf16}) with Flash Attention 2 \cite{dao2023flashattention2}.
    \item \textbf{LoRA Rank ($r$):} 16
    \item \textbf{LoRA Alpha:} 32
    \item \textbf{LoRA Dropout:} 0.05
    \item \textbf{Target Modules:} \texttt{q\_proj}, \texttt{k\_proj}, \texttt{v\_proj}, \texttt{o\_proj}, \texttt{gate\_proj}, \texttt{up\_proj}, \texttt{down\_proj}.
\end{itemize}

\subsection{Training Hyperparameters}
Models are trained using a custom Hugging Face Trainer \cite{wolf-etal-2020-transformers} implementation with PEFT \cite{peft} that calculates the in-graph token entropy dynamically.
\begin{itemize}
    \item \textbf{Learning Rate:} $2 \times 10^{-4}$
    \item \textbf{LR Scheduler:} Cosine decay with a 0.03 warmup ratio.
    \item \textbf{Epochs:} 3.0
    \item \textbf{Effective Batch Size:} 16 (8 per device $\times$ 2 gradient accumulation steps).
    \item \textbf{Optimizer:} AdamW \cite{loshchilov2018decoupled} (max gradient norm clipped to 1.0).
    \item \textbf{Sequence Lengths:} Maximum sequence length of 768 tokens; max new generated tokens capped at 512 for CoT.
\end{itemize}

\subsection{Evaluation details}
We utilize a pure Greedy Pass@1 evaluation metric. The model performs deterministic generation (argmax decoding) independently of temperature. Answer correctness is verified via strict string matching and float tolerance ($< 10^{-4}$) after stripping non-numeric characters from the final boxed output. To account for variance in LoRA initialization and data shuffling, every $\lambda$ configuration is run across 5 distinct random seeds ($0 \dots 4$), with the mean and standard deviation reported.

\begin{table}[h]
    \centering
    \caption{Detailed per-seed Pass@1 Accuracy on GSM8K across varying entropy penalty ($\lambda$) strengths using the Qwen2.5-1.5B-Instruct model.}
    \label{tab:entropy_appendix_results}
    \resizebox{\textwidth}{!}{%
    \begin{tabular}{l ccccc cc}
        \toprule
        & \multicolumn{5}{c}{\textbf{Individual Seeds (Pass@1)}} & \multicolumn{2}{c}{\textbf{Aggregate}} \\
        \cmidrule(lr){2-6} \cmidrule(lr){7-8}
        \textbf{Entropy Penalty ($\lambda$)} & \textbf{Seed 0} & \textbf{Seed 1} & \textbf{Seed 2} & \textbf{Seed 3} & \textbf{Seed 4} & \textbf{Mean} & \textbf{Std. Dev.} \\
        \midrule
        $0.0$ (Baseline) & 0.5080 & 0.4620 & 0.4960 & 0.4940 & 0.4960 & 0.4912 & $\pm 0.0154$ \\
        $0.3$            & 0.4840 & 0.5140 & 0.5040 & 0.5100 & 0.5160 & 0.5056 & $\pm 0.0116$ \\
        $0.5$            & 0.5020 & 0.5100 & 0.5100 & 0.5000 & 0.5180 & 0.5080 & $\pm 0.0064$ \\
        $1.0$            & 0.5220 & 0.5180 & 0.5320 & 0.5340 & 0.5320 & 0.5276 & $\pm 0.0064$ \\
        $2.0$            & 0.5260 & 0.5500 & 0.5440 & 0.5420 & 0.5320 & 0.5388 & $\pm 0.0086$ \\
        $3.0$            & 0.5440 & 0.5580 & 0.5400 & 0.5480 & 0.5620 & 0.5504 & $\pm 0.0083$ \\
        $4.0$            & 0.5460 & 0.5540 & 0.5380 & 0.5780 & 0.5520 & 0.5536 & $\pm 0.0134$ \\
        $6.0$            & 0.5640 & 0.5600 & 0.5960 & 0.5700 & 0.5840 & 0.5748 & $\pm 0.0134$ \\
        $8.0$            & 0.6020 & 0.5900 & 0.5540 & 0.5760 & 0.5860 & 0.5816 & $\pm 0.0161$ \\
        \bottomrule
    \end{tabular}%
    }
\end{table}

\clearpage
\section{Experimental Setup MBPP}
\label{sec:appendix_mbpp_setup}

To ensure full reproducibility, we detail the hyperparameters, data processing, and execution-based evaluation used for the MBPP code generation experiments.

\subsection{Data Preparation}
We use the sanitized split of the \textbf{MBPP} dataset \cite{austin2021program}, which ensures high-quality problem descriptions and correct unit tests.
\begin{itemize}
    \item \textbf{Train Split:} 300 randomly sampled problems.
    \item \textbf{Evaluation Split:} 200 randomly sampled problems.
    \item \textbf{Formatting:} The prompt explicitly instructs the model to write a complete Python function and provides the first unit-test as a hint to enforce proper function naming. The model is required to return the code enclosed in standard Markdown Python blocks (\texttt{```python ... ```}).
    \item \textbf{Masking:} Prompt tokens are masked (label = $-100$) so that both the cross-entropy loss and the entropy penalty are computed strictly over the generated Python AST (Abstract Syntax Tree) tokens.
\end{itemize}

\subsection{Model Architecture \& LoRA Configuration}
Experiments are conducted using the \textbf{Qwen/Qwen2.5-1.5B-Instruct} base model, utilizing Low-Rank Adaptation (LoRA) for training stability.
\begin{itemize}
    \item \textbf{Precision:} bfloat16 (\texttt{bf16}) with scaled dot-product attention (SDPA).
    \item \textbf{LoRA Rank ($r$):} 16
    \item \textbf{LoRA Alpha:} 32
    \item \textbf{LoRA Dropout:} 0.05
    \item \textbf{Target Modules:} \texttt{q\_proj}, \texttt{k\_proj}, \texttt{v\_proj}, \texttt{o\_proj}, \texttt{gate\_proj}, \texttt{up\_proj}, \texttt{down\_proj}.
\end{itemize}

\subsection{Training Hyperparameters}
Models are trained using a custom Hugging Face Trainer \cite{wolf-etal-2020-transformers} implementation with PEFT \cite{peft}.
\begin{itemize}
    \item \textbf{Learning Rate:} $2 \times 10^{-4}$
    \item \textbf{LR Scheduler:} Cosine decay with a 0.03 warmup ratio.
    \item \textbf{Epochs:} 3.0
    \item \textbf{Effective Batch Size:} 16 (8 per device $\times$ 2 gradient accumulation steps).
    \item \textbf{Optimizer:} AdamW \cite{loshchilov2018decoupled} (max gradient norm clipped to 1.0).
    \item \textbf{Sequence Lengths:} Maximum sequence length of 768 tokens; max new generated tokens capped at 256 to accommodate multiline code structures.
\end{itemize}

\subsection{Execution-Based Evaluation}
Because code correctness cannot be determined by static string matching, we utilize a dynamic evaluation pipeline. We employ a pure Greedy Pass@1 evaluation metric, wherein the model performs deterministic generation (argmax decoding, $T=0.0$). 

The extracted code blocks are passed to a sandboxed Python \texttt{exec()} environment alongside the suite of dataset-provided \texttt{assert} statements. To prevent hanging from model-generated infinite loops (e.g., non-terminating \texttt{while} conditions), we wrap the execution environment in a UNIX signal alarm, enforcing a strict 2.0-second timeout per problem. Code that throws any exception (\texttt{AssertionError}, \texttt{SyntaxError}, \texttt{NameError}) or exceeds the timeout is marked incorrect. Every $\lambda$ configuration is run across 5 distinct random seeds ($0 \dots 5$) to capture variance in LoRA initialization and training data order.

\begin{table}[h]
    \centering
    \caption{Detailed per-seed Greedy Pass@1 Accuracy on MBPP across varying entropy penalty ($\lambda$) strengths using the Qwen2.5-1.5B-Instruct model.}
    \label{tab:mbpp_entropy_appendix_results}
    \begin{tabular}{l ccccc cc}
        \toprule
        & \multicolumn{5}{c}{\textbf{Individual Seeds}} & \multicolumn{2}{c}{\textbf{Aggregate}} \\
        \cmidrule(lr){2-6} \cmidrule(lr){7-8}
        \textbf{Entropy Penalty ($\lambda$)} & \textbf{Seed 0} & \textbf{Seed 1} & \textbf{Seed 2} & \textbf{Seed 3} & \textbf{Seed 4} & \textbf{Mean} & \textbf{Std. Dev.} \\
        \midrule
        $0.0$ (Baseline) & 0.5450 & 0.5300 & 0.5150 & 0.5200 & 0.5250 & 0.5270 & $\pm 0.0103$ \\
        $0.3$            & 0.5300 & 0.5550 & 0.5350 & 0.5200 & 0.5500 & 0.5380 & $\pm 0.0129$ \\
        $0.7$            & 0.5700 & 0.5700 & 0.5600 & 0.5800 & 0.5500 & 0.5660 & $\pm 0.0102$ \\
        $1.0$            & 0.5750 & 0.5700 & 0.5750 & 0.5600 & 0.5750 & 0.5710 & $\pm 0.0058$ \\
        $1.5$            & 0.5600 & 0.5600 & 0.5550 & 0.5400 & 0.5550 & 0.5540 & $\pm 0.0073$ \\
        $2.0$            & 0.5500 & 0.5450 & 0.5550 & 0.5550 & 0.5400 & 0.5490 & $\pm 0.0058$ \\
        $3.0$            & 0.5400 & 0.5400 & 0.5300 & 0.5300 & 0.5300 & 0.5340 & $\pm 0.0049$ \\
        $6.0$            & 0.5250 & 0.5400 & 0.5400 & 0.5550 & 0.5300 & 0.5380 & $\pm 0.0103$ \\
        \bottomrule
    \end{tabular}
\end{table}

\clearpage

\section{Experimental Setup: MATH}
\label{app:math}

We detail the data processing, hyperparameters, answer-equivalence procedure, and the loss implementation used for the MATH experiments of Section~\ref{sec:math7b}.

\subsection{Data Preparation}

We use the MATH dataset \citep{hendrycks2021measuring}.

\begin{itemize}
\item \textbf{Train Split:} 1{,}000 problems sampled uniformly at random (seed 42) from the MATH training split, pooled across all seven subject categories.
\item \textbf{Evaluation Split:} 200 problems sampled from MATH-500, a held-out subset of the MATH test split. Evaluating on a test-split subset guarantees no overlap with the fine-tuning data.
\item \textbf{Filtering:} Training problems whose reference solution contains no \verb|\boxed{}| expression are discarded, since there is no answer to imitate. Problems containing Asymptote figures (\verb|[asy]|) are also discarded, to avoid fine-tuning the model to emit diagram markup.
\item \textbf{Formatting:} The prompt instructs the model to solve the problem step by step and to place its final answer in \verb|\boxed{}|. Reference solutions are used verbatim as targets; they already terminate in a boxed answer.
\item \textbf{Masking:} Prompt tokens are masked ($\text{label} = -100$) so that both the cross-entropy loss and the entropy penalty are computed strictly over response tokens.
\end{itemize}

\subsection{Model Architecture and LoRA Configuration}

Experiments use \texttt{Qwen/Qwen2.5-7B-Instruct} \citep{yang2024qwen2} with Low-Rank Adaptation \citep{hu2022lora}.

\begin{itemize}
\item \textbf{Precision:} bfloat16 (bf16) with FlashAttention-2 \citep{dao2023flashattention2}.
\item \textbf{LoRA Rank ($r$):} 16
\item \textbf{LoRA Alpha:} 32
\item \textbf{LoRA Dropout:} 0.05
\item \textbf{Target Modules:} \texttt{q\_proj}, \texttt{k\_proj},
\texttt{v\_proj}, \texttt{o\_proj}, \texttt{gate\_proj}, \texttt{up\_proj},
\texttt{down\_proj}.
\end{itemize}

\subsection{Training Hyperparameters}

Models are trained with a custom Hugging Face \texttt{Trainer} \citep{wolf-etal-2020-transformers} implementation using PEFT \citep{peft} that computes the in-graph token entropy dynamically.

\begin{itemize}
\item \textbf{Learning Rate:} $2 \times 10^{-4}$
\item \textbf{LR Scheduler:} Cosine decay with a 0.03 warmup ratio.
\item \textbf{Epochs:} 3.0
\item \textbf{Effective Batch Size:} 16 (2 per device $\times$ 8 gradient accumulation steps), with gradient checkpointing enabled.
\item \textbf{Optimizer:} AdamW \citep{loshchilov2018decoupled} (max gradient norm clipped to 1.0).
\item \textbf{Sequence Lengths:} Maximum sequence length of 1{,}024 tokens; max new generated tokens capped at 1{,}024, reflecting the longer derivations MATH requires relative to GSM8K.
\item \textbf{Seeds:} Every $\lambda$ configuration is run across 3 random seeds ($0 \ldots 2$), with mean and standard deviation reported.
\end{itemize}

\subsection{Chunked Computation of the Entropy Term}
\label{app:chunking}

The entropy penalty requires a full softmax over the vocabulary at every response position. Computed in fp32, as numerical stability demands, this is a tensor of shape $N \times V$ with $V = 151{,}936$ for Qwen2.5, where $N$ is the number of response tokens in the batch; several such tensors are live simultaneously during the backward pass. At 7B this footprint is prohibitive alongside the model weights.

We therefore evaluate the loss in chunks of 512 response positions, wrapping each chunk in gradient checkpointing so that the fp32 intermediates are discarded after the forward pass and recomputed during backward. Cross-entropy and entropy are accumulated as sums over chunks and normalized once at the end. The computation is numerically identical to the unchunked form; peak memory for the loss head falls from $O(NV)$ to $O(CV)$ for chunk size $C$. This costs one additional forward pass through the softmax and adds negligible wall-clock time relative to the transformer forward.

\subsection{Answer Equivalence}
\label{app:equiv}

Unlike GSM8K, MATH answers are arbitrary \LaTeX{} expressions ($\frac{3}{4}$, $2\sqrt{3}$, $(-\infty, 2]$, $x^2 + 1$), so string equality would systematically under-count correct answers written in a different but equivalent form. We use the following procedure.

\begin{enumerate}
\item \textbf{Extraction.} The last \verb|\boxed{}| expression in the generation is extracted by brace matching. If none is present, an explicit ``the final answer is $X$'' pattern is accepted as a fallback. We deliberately do \emph{not} fall back to the last numeral in the generation: on MATH this manufactures spurious correctness whenever a digit appearing in the reasoning trace coincides with the reference answer, and would bias the comparison toward whichever configuration produces shorter derivations.
\item \textbf{Normalization.} The extracted answer and the reference are each normalized under two independent conventions, covering fraction and radical canonicalization, removal of units, degree symbols, currency symbols and \verb|\text{}| wrappers, and digit-group separators. A match under either convention is accepted.
\item \textbf{Numeric comparison.} If both normalized forms parse as reals, they are compared with tolerance $10^{-6}$.
\item \textbf{Symbolic comparison.} As a last resort, both expressions are converted to symbolic form and their difference simplified; a difference of zero is accepted. Each such check is guarded by a 5-second timeout, since model output can induce pathological simplification.
\end{enumerate}

\clearpage

\section{Temperature Ablation}
\label{app:sampled}

Every result in the main paper is measured with greedy Pass@1. This appendix reports the same three experiments measured instead with temperature-sampled Pass@1, explains why sampled evaluation is the closer empirical analogue of the verifier risk our theory is stated in terms of, and explains why we nonetheless promote the greedy numbers to the main paper.

\subsection{Why Sampled Evaluation is Closer to the Theory}

The verifier risk of Section~3 is $R(h) = \mathbb{E}_{X \sim \mathcal{D},\, Y \sim h(\cdot|X)}[\mathbf{1}\{Y \notin \mathcal{C}_X\}]$: an expectation over draws from the policy, equal to the probability mass the policy places outside the correct set. Sampling $k$ completions per problem and scoring the fraction that the verifier accepts is a direct Monte Carlo estimate of $1 - R(h)$ for the decoded policy. Greedy Pass@1 is not an estimate of this quantity at all. It is a property of a single argmax trajectory, and it is almost entirely insensitive to how mass is allocated away from that trajectory, which is precisely the degree of freedom Proposition~3.2 exploits and the entropy penalty is designed to control. Two policies that differ enormously in verifier risk can decode identically under argmax.

Sampled evaluation therefore tests the mechanism, not just the outcome. By withdrawing mass from incorrect outputs rather than by some incidental effect on the argmax path, the measured gains should be \emph{larger} under sampling than under greedy decoding. They are, on all three benchmarks.

One caveat on the match: sampling at $T = 0.8$ estimates the verifier risk of the \emph{tempered} policy $h^{1/T}$, not of $h$ itself. Only $T = 1$ recovers $R(h)$ exactly. We report $T = 0.8$ because it is the decoding temperature in common practical use.

\subsection{Protocol}

Training is unchanged in every respect, same model, same LoRA configuration, same hyperparameters, same seeds, same $\lambda$ grid, same data. Only decoding differs. For each problem we draw $k$ independent samples at temperature $T = 0.8$ and report
\[
\text{sampled Pass@1} \;=\; \frac{\text{number of accepted samples}}
{n_{\text{eval}} \cdot k}.
\]
We use $k = 8$ on GSM8K and $k = 4$ on MBPP and MATH. Sampling is from the \emph{full} tempered softmax: we set \texttt{top\_k} $= 0$ and \texttt{top\_p} $= 1$. Truncated sampling discards the tail of the predictive distribution, which is exactly the region whose mass this paper is about; measuring under truncation would suppress both the failure mode and the correction. The GSM8K sampled sweep uses a coarser $\lambda$ grid and three seeds rather than five; MBPP and MATH use the full main-paper grids and seed counts.

\subsection{Results}

\begin{figure}[h]
\centering
\includegraphics[width=0.62\textwidth]{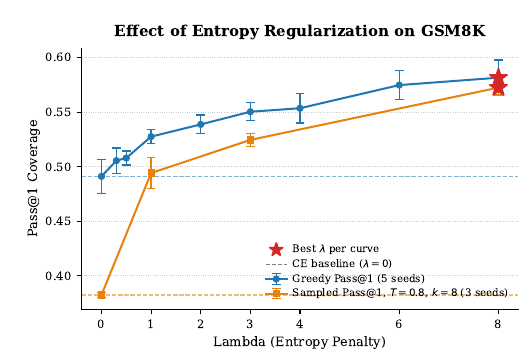}
\caption{GSM8K Pass@1 under greedy decoding and under temperature sampling ($T = 0.8$, $k = 8$) versus entropy penalty $\lambda$. Dashed lines mark each curve's cross-entropy baseline; stars mark each curve's best $\lambda$. Error bars show $\pm 1$ standard deviation over seeds.}
\label{fig:gsm8k-sampled}
\end{figure}

\begin{table}[h]
\centering
\caption{GSM8K Pass@1 accuracy under varying entropy penalties, under greedy decoding and under temperature sampling ($T = 0.8$, $k = 8$ samples per problem). Model: \texttt{Qwen2.5-1.5B-Instruct}. The sampled column is evaluated on a coarser $\lambda$ grid and over 3 seeds; dashes mark configurations not run under sampling.}
\label{tab:gsm8k-greedy-vs-sampled}
\begin{tabular}{lcc}
\toprule
 & Greedy Pass@1 & Sampled Pass@1 ($T=0.8$) \\
\midrule
0.0 (Baseline) & 0.4912 $\pm$ 0.0154 & 0.3825 $\pm$ 0.0023 \\
0.3 & 0.5056 $\pm$ 0.0116 & -- \\
0.5 & 0.5080 $\pm$ 0.0064 & -- \\
1 & 0.5276 $\pm$ 0.0064 & 0.4942 $\pm$ 0.0144 \\
2 & 0.5388 $\pm$ 0.0086 & -- \\
3 & 0.5504 $\pm$ 0.0083 & 0.5246 $\pm$ 0.0060 \\
4 & 0.5536 $\pm$ 0.0134 & -- \\
6 & 0.5748 $\pm$ 0.0134 & -- \\
8 & \textbf{0.5816} $\pm$ 0.0161 & \textbf{0.5725} $\pm$ 0.0068 \\
\bottomrule
\end{tabular}
\end{table}

\begin{figure}[h]
\centering
\includegraphics[width=0.62\textwidth]{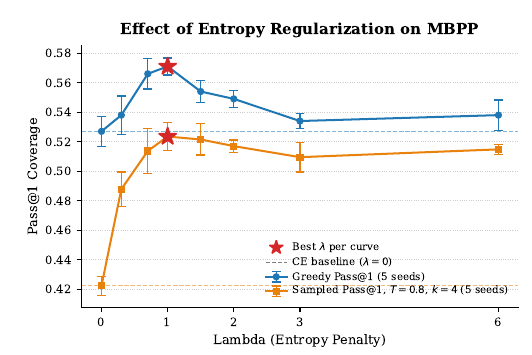}
\caption{MBPP Pass@1 under greedy decoding and under temperature sampling ($T = 0.8$, $k = 4$) versus entropy penalty $\lambda$. Dashed lines mark each curve's cross-entropy baseline; stars mark each curve's best $\lambda$. Error bars show $\pm 1$ standard deviation over 5 seeds.}
\label{fig:mbpp-sampled}
\end{figure}

\begin{table}[h]
\centering
\caption{MBPP Pass@1 accuracy under varying entropy penalties, under greedy decoding and under temperature sampling ($T = 0.8$, $k = 4$ samples per problem). Model: \texttt{Qwen2.5-1.5B-Instruct}.}
\label{tab:mbpp-greedy-vs-sampled}
\begin{tabular}{lcc}
\toprule
 & Greedy Pass@1 & Sampled Pass@1 ($T=0.8$) \\
\midrule
0.0 (Baseline) & 0.5270 $\pm$ 0.0103 & 0.4223 $\pm$ 0.0064 \\
0.3 & 0.5380 $\pm$ 0.0129 & 0.4878 $\pm$ 0.0117 \\
0.7 & 0.5660 $\pm$ 0.0102 & 0.5137 $\pm$ 0.0153 \\
1 & \textbf{0.5710} $\pm$ 0.0058 & \textbf{0.5235} $\pm$ 0.0093 \\
1.5 & 0.5540 $\pm$ 0.0073 & 0.5215 $\pm$ 0.0106 \\
2 & 0.5490 $\pm$ 0.0058 & 0.5170 $\pm$ 0.0041 \\
3 & 0.5340 $\pm$ 0.0049 & 0.5095 $\pm$ 0.0100 \\
6 & 0.5380 $\pm$ 0.0103 & 0.5148 $\pm$ 0.0033 \\
\bottomrule
\end{tabular}
\end{table}

\begin{table}[h]
\centering
\caption{Detailed per-seed sampled Pass@1 accuracy on MBPP at $T = 0.8$ with $k = 4$ samples per problem, using \texttt{Qwen2.5-1.5B-Instruct}.}
\label{tab:mbpp-sampled-seeds}
\begin{tabular}{lcccccrr}
\toprule
 & \multicolumn{5}{c}{Individual Seeds} & \multicolumn{2}{c}{Aggregate} \\
\cmidrule(lr){2-6} \cmidrule(lr){7-8}
Entropy Penalty ($\lambda$) & Seed 0 & Seed 1 & Seed 2 & Seed 3 & Seed 4 & Mean & Std. Dev. \\
\midrule
0.0 (Baseline) & 0.4113 & 0.4300 & 0.4238 & 0.4200 & 0.4263 & 0.4223 & $\pm$0.0064 \\
0.3 & 0.5038 & 0.4688 & 0.4950 & 0.4863 & 0.4850 & 0.4878 & $\pm$0.0117 \\
0.7 & 0.5413 & 0.5175 & 0.5100 & 0.4975 & 0.5025 & 0.5138 & $\pm$0.0153 \\
1 & 0.5312 & 0.5250 & 0.5100 & 0.5350 & 0.5162 & 0.5235 & $\pm$0.0093 \\
1.5 & 0.5125 & 0.5175 & 0.5100 & 0.5300 & 0.5375 & 0.5215 & $\pm$0.0106 \\
2 & 0.5212 & 0.5138 & 0.5225 & 0.5150 & 0.5125 & 0.5170 & $\pm$0.0041 \\
3 & 0.5125 & 0.5050 & 0.5162 & 0.4925 & 0.5212 & 0.5095 & $\pm$0.0100 \\
6 & 0.5175 & 0.5138 & 0.5162 & 0.5175 & 0.5088 & 0.5148 & $\pm$0.0033 \\
\bottomrule
\end{tabular}
\end{table}

\begin{figure}[h]
\centering
\includegraphics[width=0.62\textwidth]{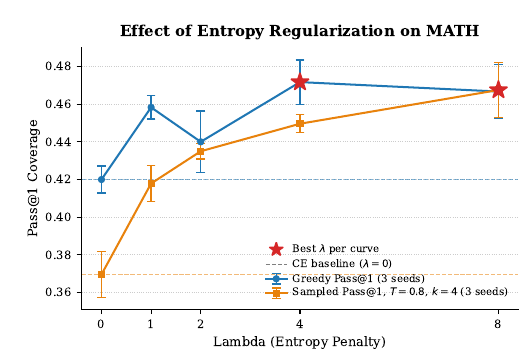}
\caption{MATH Pass@1 under greedy decoding and under temperature sampling ($T = 0.8$, $k = 4$) versus entropy penalty $\lambda$ for \texttt{Qwen2.5-7B-Instruct}. Dashed lines mark each curve's cross-entropy baseline; stars mark each curve's best $\lambda$. Error bars show $\pm 1$ standard deviation over 3 seeds.}
\label{fig:math-sampled}
\end{figure}

\begin{table}[h]
\centering
\caption{MATH Pass@1 accuracy under varying entropy penalties, under greedy decoding and under temperature sampling ($T = 0.8$, $k = 4$ samples per problem). Model: \texttt{Qwen2.5-7B-Instruct}.}
\label{tab:math-greedy-vs-sampled}
\begin{tabular}{lcc}
\toprule
 & Greedy Pass@1 & Sampled Pass@1 ($T=0.8$) \\
\midrule
0.0 (Baseline) & 0.4200 $\pm$ 0.0071 & 0.3696 $\pm$ 0.0121 \\
1 & 0.4583 $\pm$ 0.0062 & 0.4179 $\pm$ 0.0097 \\
2 & 0.4400 $\pm$ 0.0163 & 0.4350 $\pm$ 0.0041 \\
4 & \textbf{0.4717} $\pm$ 0.0118 & 0.4496 $\pm$ 0.0048 \\
8 & 0.4667 $\pm$ 0.0143 & \textbf{0.4675} $\pm$ 0.0145 \\
\bottomrule
\end{tabular}
\end{table}

\begin{table}[h]
\centering
\caption{Detailed per-seed sampled Pass@1 accuracy on MATH at $T = 0.8$ with $k = 4$ samples per problem, using \texttt{Qwen2.5-7B-Instruct}.}
\label{tab:math-sampled-seeds}
\begin{tabular}{lcccrr}
\toprule
 & \multicolumn{3}{c}{Individual Seeds} & \multicolumn{2}{c}{Aggregate} \\
\cmidrule(lr){2-4} \cmidrule(lr){5-6}
Entropy Penalty ($\lambda$) & Seed 0 & Seed 1 & Seed 2 & Mean & Std. Dev. \\
\midrule
0.0 (Baseline) & 0.3525 & 0.3775 & 0.3787 & 0.3696 & $\pm$0.0121 \\
1 & 0.4175 & 0.4300 & 0.4062 & 0.4179 & $\pm$0.0097 \\
2 & 0.4300 & 0.4400 & 0.4350 & 0.4350 & $\pm$0.0041 \\
4 & 0.4450 & 0.4475 & 0.4562 & 0.4496 & $\pm$0.0048 \\
8 & 0.4475 & 0.4813 & 0.4738 & 0.4675 & $\pm$0.0145 \\
\bottomrule
\end{tabular}
\end{table}

\paragraph{The gains are roughly twice as large under sampling.}
On GSM8K the entropy penalty buys $+9.04\%$ absolute under greedy decoding and $+19.00\%$ under sampling ($0.3825 \to 0.5725$). On MBPP it buys $+4.40\%$ greedy and $+10.12\%$ sampled ($0.4223 \to 0.5235$). On MATH it buys $+5.17\%$ greedy and $+9.79\%$ sampled ($0.3696 \to 0.4675$). The pattern is consistent: the metric that can see off-argmax probability mass records about twice the improvement of the metric that cannot. This is the signature the mechanism predicts.

\subsection{Why the Main Paper Reports Greedy Pass@1}

There are two reasons we use the greedy numbers in the main paper.

\paragraph{Greedy decoding performs better.} Greedy Pass@1 is higher than sampled Pass@1 at every $\lambda$ on every benchmark ($0.5816$ against $0.5725$ on GSM8K, $0.5710$ against $0.5235$ on MBPP, $0.4717$ against $0.4675$ on MATH). A practitioner deploying one of these models for a single attempt per problem would decode greedily.

\paragraph{Sampled evaluation at a fixed temperature confounds the policy with its effective temperature.} Minimizing entropy sharpens the predictive distribution. Sampling a sharpened distribution at $T = 0.8$ is, in its effect, similar to sampling the original distribution at some lower temperature. A sampled comparison at fixed $T$ therefore cannot on its own separate "the regularized policy places less mass on incorrect outputs" from "the regularized policy is being decoded at a lower effective temperature". Greedy decoding is immune to this by construction: the argmax is invariant to any monotone temperature rescaling, so greedy Pass@1 cannot be inflated by an effective-temperature effect.

\clearpage

\section{Renyi Order Ablation}
\label{app:renyi}

Section~4 derives the entropy penalty from the family $S_\alpha$ and then selects $\alpha = 1$, and Theorem~B.4 proves PAC-learnability for the $S_\alpha$ loss along a sequence $\alpha_m \downarrow 0$, with Corollary~B.5 instantiating $\alpha_m = 1/m$. The theory therefore points toward the small-$\alpha$ end of the family, where $\log S_\alpha$ approaches $\log|\mathrm{supp}(\cdot)|$, while the objective we actually train with sits at $\alpha = 1$. This appendix asks what the order parameter buys empirically.

\subsection{Objective and Protocol}

We train with
\[
\mathcal{L} \;=\; \mathrm{CE} \;+\; \lambda \, R_\alpha, \qquad
R_\alpha(p) \;=\; \frac{1}{1 - \alpha} \log \sum_{y} p(y)^\alpha ,
\]
evaluated in log space as $\mathrm{logsumexp}(\alpha \log p) / (1 - \alpha)$ for numerical stability, with $R_1$ taken to be the Shannon entropy, recovering the main objective exactly. We sweep $\alpha \in \{0.1, 0.25, 0.5, 1, 2\}$ and $\lambda \in \{0, 1, 3, 8\}$. At $\lambda = 0$ the penalty is switched off and the objective is identical for every $\alpha$, so that point is trained once and shared across all curves.

Everything else follows Appendix~C: GSM8K, \texttt{Qwen2.5-1.5B-Instruct}, the same LoRA configuration and hyperparameters, greedy Pass@1. Two differences from Table~1 are worth stating, because they shift the absolute numbers: this sweep uses 200 evaluation problems rather than 500, and 3 seeds rather than 5. The cross-entropy baseline accordingly reads $0.4717 \pm 0.0062$ here against $0.4912 \pm 0.0154$ in Table~1. Comparisons within this appendix are internally consistent; they should not be read against the main-paper tables.

\subsection{Results}

\begin{figure}[h]
\centering
\includegraphics[width=0.66\textwidth]{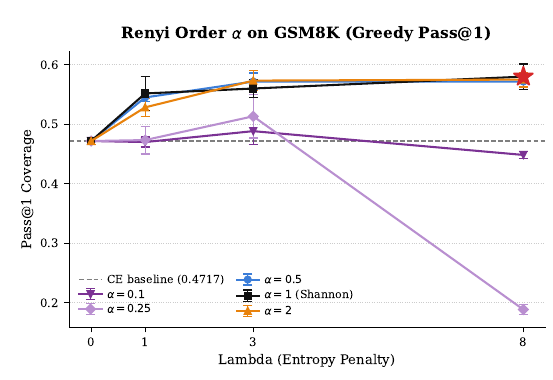}
\caption{Greedy Pass@1 on GSM8K for five Renyi orders across penalty strengths. The dashed line is the cross-entropy baseline, which all curves share at $\lambda = 0$. Error bars show $\pm 1$ standard deviation over 3 seeds. Orders $\alpha \in \{0.5, 1, 2\}$ are mutually indistinguishable; $\alpha = 0.1$ never separates from the baseline and $\alpha = 0.25$ collapses at $\lambda = 8$.}
\label{fig:renyi}
\end{figure}

\begin{table}[h]
\centering
\caption{Greedy Pass@1 on GSM8K by Renyi order $\alpha$ and penalty strength $\lambda$, aggregated over 3 seeds. The $\lambda = 0$ baseline ($0.4717 \pm 0.0062$) is shared by all orders.}
\label{tab:renyi-main}
\begin{tabular}{lccc}
\toprule
& $\lambda = 1$ & $\lambda = 3$ & $\lambda = 8$ \\
\midrule
$\alpha = 0.1$        & $0.4700 \pm 0.0082$ & $0.4883 \pm 0.0225$ & $0.4483 \pm 0.0062$ \\
$\alpha = 0.25$       & $0.4733 \pm 0.0232$ & $0.5133 \pm 0.0366$ & $0.1883 \pm 0.0085$ \\
$\alpha = 0.5$        & $0.5450 \pm 0.0071$ & $0.5717 \pm 0.0143$ & $0.5717 \pm 0.0103$ \\
$\alpha = 1$ (Shannon) & $0.5517 \pm 0.0287$ & $0.5600 \pm 0.0147$ & $\mathbf{0.5800 \pm 0.0212}$ \\
$\alpha = 2$          & $0.5283 \pm 0.0155$ & $0.5733 \pm 0.0170$ & $0.5750 \pm 0.0122$ \\
\bottomrule
\end{tabular}
\end{table}

\paragraph{Orders in $[0.5, 2]$ are one curve.}
Taking the largest minus the smallest mean across $\alpha \in \{0.5, 1, 2\}$ at each $\lambda$ and comparing it to the pooled standard error of that difference gives $1.2\sigma$ at $\lambda = 1$, $1.0\sigma$ at $\lambda = 3$ and $0.6\sigma$ at $\lambda = 8$. The ordering is not even stable: $\alpha = 1$ is highest at $\lambda = 1$ and $\lambda = 8$ but lowest at $\lambda = 3$. Within this range the Renyi order is not a meaningful design choice at three seeds.

\paragraph{Orders below $0.5$ fail.}
At $\lambda = 1$, $\alpha = 0.1$ reaches $0.4700$ and $\alpha = 0.25$ reaches $0.4733$, both indistinguishable from the un-regularized baseline of $0.4717$. At $\lambda = 8$, $\alpha = 0.25$ collapses to $0.1883$, roughly 28 points \emph{below} baseline, consistently across all three seeds.

\paragraph{The failure is not a numerical artifact.}
For $\alpha < 1$ the sum $\sum_y p(y)^\alpha$ up-weights the tail of the distribution, and with $|\mathcal{Y}| = 151{,}936$ one might worry that at small $\alpha$ the penalty reads the softmax's numerical floor rather than the model's support. We measured this directly, recording the share of $\sum_y p(y)^\alpha$ contributed by tokens outside the top 100. At $\alpha = 0.25$ that share is $0.053$, $0.032$ and $0.016$ at $\lambda = 1, 3, 8$; at $\alpha = 0.1$ it is $0.480$, $0.414$ and $0.359$. The order that fails most dramatically, $\alpha = 0.25$, is the one whose penalty is almost entirely determined by the head of the distribution. Whatever is going wrong at small $\alpha$ is a property of the objective, not of floating-point arithmetic.

\subsection{Reconciliation With Theorem B.4}

These results do not contradict Theorem~B.4; they locate the regime in which its constants bind. The theorem requires $\alpha_m \downarrow 0$ and $\lambda_m \uparrow \infty$ \emph{jointly with the sample size}, and its guarantee is mediated by Lemma~B.3, which bounds the gap between $|\mathrm{supp}(p)|$ and $S_\alpha(p)$ by $\left(1 - \eta^{\alpha / (1-\alpha)}\right) / \eta$, with a sample complexity scaling as $1/\eta^2$. In the LLM setting, the softmax assigns no exact zeros, so the effective $\eta$ is the smallest probability the model represents and $1/\eta$ is astronomically large. The sample sizes at which the small-$\alpha$ guarantees become operative are therefore far beyond anything reachable here, and at a fixed finite $\alpha$ and finite $m$ the theorem makes no prediction about which order should perform better.

The practical reading is that $\alpha = 1$ is not a compromise forced on us by tractability. It sits inside the range that works, and the end of the family the theory idealizes is empirically the worse end. Shannon entropy is both the convenient choice and, over the orders we tested, an effective one.

\subsection{Limitations}

Three caveats. This is one model on one dataset at three seeds, so the conclusion that $\alpha \in [0.5, 2]$ is a plateau should be read as an absence of evidence for an ordering rather than evidence of exact equivalence. The $\alpha = 0.1$ runs exhibited pre-clipping gradient norms of 7 to 10 against a clipping threshold of 1.0, roughly an order of magnitude above the other configurations, so the cross-entropy component of the gradient was attenuated and those cells were effectively optimized under a different regime; part of their deficit may be attributable to that rather than to the order itself. Finally, we have not diagnosed the mechanism of the $\alpha = 0.25$, $\lambda = 8$ collapse, and we report it as an observed instability rather than as a characterized failure mode.

\clearpage

\section{Answer-Only Negative Control}
\label{app:answeronly}

Our account of why the entropy penalty helps is specific: it withdraws probability mass from outputs the verifier rejects, which matters only when the demonstrated answer is one of many acceptable ones. If the correct set is a singleton, cross-entropy already concentrates all mass on the single acceptable output and a support penalty has nothing left to correct. The theory therefore makes a falsifiable prediction - no real gain when the demonstrated answer is the only accepted one - and this appendix tests it.

We test it by removing the chain of thought from GSM8K. The model is trained to emit the final number and nothing else, so the target is the unique correct answer rather than one derivation among many. Everything else is held fixed: the same dataset, the same \texttt{Qwen2.5-1.5B-Instruct} model, the same LoRA configuration and hyperparameters, the same $\lambda$ grid, the same five seeds, the same trainer, the same greedy Pass@1 verifier, the same 1{,}000 training and 500 evaluation problems. The only difference from Section~5.1 is the format of the target.

\begin{figure}[h]
\centering
\includegraphics[width=0.95\textwidth]{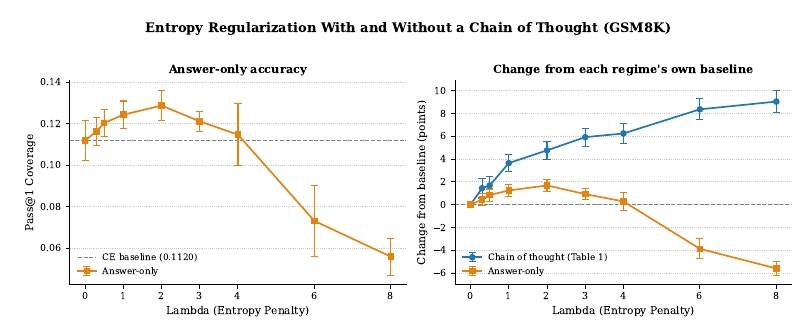}
\caption{GSM8K with and without a chain of thought. Left: answer-only greedy Pass@1 against the entropy penalty, with its cross-entropy baseline dashed. Right: change from each regime's own baseline, so the two can be compared despite very different absolute accuracies. The chain-of-thought curve rises monotonically to $+9.04$ points; the answer-only curve is flat through $\lambda = 4$ and then falls to $-5.60$. Error bars show $\pm 1$ standard deviation over 5 seeds, and on the right the standard error of the difference from baseline.}
\label{fig:answeronly}
\end{figure}
\begin{table}[h]
\centering
\caption{GSM8K answer-only greedy Pass@1 under varying entropy penalties, aggregated over 5 seeds, with the change from the cross-entropy baseline in percentage points (95\% Welch interval).}
\label{tab:answeronly}
\begin{tabular}{lcc}
\toprule
$\lambda$ & Pass@1 & Change (95\% CI) \\
\midrule
0.0 (Baseline) & $0.1120 \pm 0.0097$ & --- \\
0.3 & $0.1164 \pm 0.0067$ & $+0.44\ [-0.72, +1.60]$ \\
0.5 & $0.1204 \pm 0.0067$ & $+0.84\ [-0.32, +2.00]$ \\
1.0 & $0.1244 \pm 0.0066$ & $+1.24\ [+0.09, +2.39]$ \\
2.0 & $0.1288 \pm 0.0074$ & $+1.68\ [+0.48, +2.88]$ \\
3.0 & $0.1212 \pm 0.0047$ & $+0.92\ [-0.14, +1.98]$ \\
4.0 & $0.1148 \pm 0.0148$ & $+0.28\ [-1.45, +2.01]$ \\
6.0 & $0.0732 \pm 0.0170$ & $-3.88\ [-5.80, -1.96]$ \\
8.0 & $0.0560 \pm 0.0089$ & $\mathbf{-5.60}\ [-6.89, -4.31]$ \\
\bottomrule
\end{tabular}
\end{table}

\paragraph{Results.}
The answer-only baseline reaches $0.1120$. Accuracy drifts up slightly through $\lambda = 2$, by $+1.68$ points with a 95\% interval of $[+0.48, +2.88]$, which does not survive correction for the eight comparisons in the sweep and which we do not claim as an effect. It then falls sharply: $-3.88$ points $[-5.80, -1.96]$ at $\lambda = 6$ and $-5.60$ points $[-6.89, -4.31]$ at $\lambda = 8$, halving accuracy relative to the cross-entropy baseline. This stark contrast relative to the clear advantage entropy penalties gave the reasoning model provides evidence that the penalty works due to the mechanism in Section~3.

\clearpage

\section{LLM Disclosure}
LLMs were used in 3 primary ways. The first is essentially all experiments were set up and instantiated with LLM coding. We had a separate LLM check for correctness of the code. The second is when the paper was drafted, we repeatedly asked LLMs to critique and we incorporated its feedback if we deemed it to be appropriate. The third was finding related papers. We read all of these before ever citing them to confirm the relation to our work.
\end{document}